\renewcommand{\(}{\left(}
\renewcommand{\)}{\right)}
\renewcommand{\[}{\left[}
\renewcommand{\]}{\right]}
\renewcommand{\S}{\mathbf{S}}
\newcommand{\y}{\mathbf{y}}
\newcommand{\E}{\mathbf{E}}
\newcommand{\x}{\mathbf{x}}
\newcommand{\I}{\mathbf{I}}
\newcommand{\J}{\mathbf{J}}
\newcommand{\C}{\mathbf{C}}
\newcommand{\A}{\mathbf{A}}
\newcommand{\M}{\mathbf{M}}
\newcommand{\B}{\mathbf{B}}
\newcommand{\Tr}[1]{{\rm{Tr}}\left(#1\right)}
\newcommand{\End}[1]{{\rm{End}}}
\renewcommand{\log}[1]{{\rm{log}}#1}
\renewcommand{\det}[1]{\left|#1\right|}
\newcommand{\conv}[1]{{\rm{conv}}\(#1\)}
\newtheorem{lemma}{Lemma}
\newtheorem{definition}{Definition}
\newtheorem{prop}{Proposition}
\newtheorem{corollary}{Corollary}
\newtheorem{rem}{Remark}
\newcommand{\norm}[1]{\left\lVert#1\right\rVert}
\newcommand{\cm}{\textcolor{blue}}
\begin{document}
\title{Stationary Geometric Graphical \\ Model Selection}

\author[1]{Ilya Soloveychik\thanks{This work was supported by the Fulbright Foundation and Office of Navy Research grant N00014-17-1-2075.}}
\author[2]{Vahid Tarokh}
\affil[1]{\normalsize John A. Paulson School of Engineering and Applied Sciences, Harvard University}
\affil[2]{\normalsize Department of Electrical and Computer Engineering, Duke University}
\maketitle

\begin{abstract}
We consider the problem of model selection in Gaussian Markov fields in the sample deficient scenario. In many practically important cases, the underlying networks are embedded into Euclidean spaces. Using the natural geometric structure, we introduce the notion of spatially stationary distributions over geometric graphs. This directly generalizes the notion of stationary time series to the multidimensional setting lacking time axis. We show that the idea of spatial stationarity leads to a dramatic decrease in the sample complexity of the model selection compared to abstract graphs with the same level of sparsity. For geometric graphs on randomly spread vertices and edges of bounded length, we develop tight information-theoretic bounds on sample complexity and show that a finite number of independent samples is sufficient for a consistent recovery. Finally, we develop an efficient technique capable of reliably and consistently reconstructing graphs with a bounded number of measurements. 
\end{abstract}


\begin{IEEEkeywords}
Model selection, Markov random fields, Gaussian graphical models, Spatial stationarity.
\end{IEEEkeywords}

\section{Introduction}

\subsection{Learning Markov Random Fields}
Markov random fields, or undirected probabilistic graphical models, provide a structured representation of the joint distributions of families of random variables. A Markov random field is an association of a set of random variables with the vertices of a graph, where the missing edges describe conditional independence properties among the variables \cite{lauritzen1996graphical}. It was shown by Hammersley and Clifford in their unpublished work \cite{lauritzen1996graphical} that the joint probability distribution specified by such a model factorizes according to the underlying graph. The practical importance of Markov random fields is hard to overestimate. They have been applied to a large number of areas including bioinformatics, social science, control theory, civil engineering, political science, epidemiology, image processing, marketing analysis, and many others. For instance, a graphical model may be used to represent friendships between people in a social network \cite{farasat2015probabilistic} or links between organisms with the propensity to spread an infectious disease \cite{knorr1998modelling}. The increased availability of large-scale network data created every day by traditional and social media, sensors, mobile devices, and social infrastructure provides rich opportunities and unique challenges for the analysis, prediction, and summarization, making the development of novel large network analysis techniques absolutely necessary.

Given the graph structure, the most common computational tasks include calculating marginals, partition function, maximum a posteriori assignments, sampling from the distribution, and other questions of statistical inference. On the other hand, in many applications estimating the unknown edge structure of the underlying graph, also known as \textit{model selection} or \textit{inverse problem}, has attracted a great deal of attention. Naturally, both problems are challenging especially in high dimensional scenarios and are known to be NP-hard for general models \cite{karger2001learning,bogdanov2008complexity}.

In model selection, the naive approach of searching exhaustively over the space of all graphs is computationally intractable, since there are as many as $2^{{p \choose 2}}$ distinct graphs over $p$ vertices. Therefore, prior knowledge on the graph structure is used to make the size of the family of models tractable. A variety of methods have been proposed to address this problem. One of the first works in this direction was performed by Chow and Liu \cite{chow1968approximating}, who showed that if the underlying graph is known to be a tree, the model selection problem reduces to a maximum-weight spanning tree problem. Other models considered in the literature include sparse networks with bounded degrees of the vertices \cite{bresler2015efficiently, santhanam2012information}, walk-summable and locally separable graphs \cite{anandkumar2012high}, thresholding methods \cite{bresler2008reconstruction}, $\ell_1$-based relaxations \cite{meinshausen2006high, ravikumar2011high, friedman2008sparse, yuan2007model}, methods based on penalized pseudo-likelihood \cite{ji1996consistent}, and many others.


Most of the papers listed in the previous paragraph consider the graph selection problem in the high-dimensional setting, meaning that the number of samples $n$ is comparable to or even less than the dimension $p$ of the parameter space. One of the main focuses of these works consists in deriving tight information-theoretic lower bounds on the sample complexity, or in other words the necessary number of independent snapshots of a network that would allow reliable recovery of its connectivity structure. The benchmark results \cite{santhanam2012information, anandkumar2012high} for the Ising and Gaussian models on $d$-regular graphs claim that as a function of the model parameter $p$, the number of measurements
\begin{equation}
\label{eq:n_prop_log}
n = c\, \log\, p
\end{equation}
is required to make the learning possible. However, in many real wold scenarios even this moderate dependence may be prohibitively demanding. One of the reasons for that is the large value of the constant of proportionality $c$ between $n$ and $\log\, p$ in (\ref{eq:n_prop_log}). Quite often, this constant does not receive much attention during the analysis, while in practice its value becomes critical. Moreover, depending on the parameters of the graph for quite a wide range of dimensions $p$ the constant $c$ may be so large that the required number of samples will be essentially greater than the dimension. The aforementioned issues call for the development of new techniques that will allow to decrease the sample complexity beyond the logarithmic scaling.

In the sample deficient regime, all the parameters of the problem cannot be estimated reliably. Therefore, the cardinality of the family of admissible models should be reduced. This can be achieved by reconsidering the goal of the learning process or by introducing some additional structure into the model. Such structure can arise from the physical properties of the system, its spatial design, or can be reduced to an off-line precomputation of parameters. In this work, we rely on the spatial structure of the real networks. Physical networks are naturally embedded into Euclidean spaces. The induced spatial structure dictates the sparsity pattern of the underlying graph and suggests that the subgraphs of the entire network that have similar geometry should also exhibit similar statistical behavior. 

\subsection{Our Contribution}
In this article, we utilize geometric properties of the graphs to facilitate their structure learning through the notion of \textit{spatially stationary distributions}. Essentially, this concept is a direct generalization of stationary time series from one-dimensional chains to more general graphs. The absence of time axis makes such generalization highly non-trivial and requires proper definition and exploitation of the network topology. For concreteness and to better illustrate the idea behind the spatially stationary distributions over graphs, we focus on graphs whose vertices are uniformly randomly distributed over two dimensional regions and whose edges are not too long. We consider Gaussian graphical models over graphs of bounded vertex degree and first demonstrate that in this setup consistent learning of the entire network structure is possible even with a finite number of samples independent of the size of the graph. In addition, we develop an efficient adaptive method capable of consistently learning all the edges of the network with the number of measurements independent of $p$.

It is important to emphasize that our notion of \textit{spatial stationarity} is completely different from the concept of stationary graphs utilized in the graph signal processing literature \cite{perraudin2017stationary, marques2016stationary}. In these papers, the framework is based on the harmonic analysis over graphs and is not tied to their geometry, while we directly extend the notion of stationarity from time series using the spatial structure of the underlying time counts.

The rest of the paper is organized as follows. We set up notation, provide a number of motivating examples, and introduce the concept of spacial stationarity in Section \ref{sec:stat_gr}. To make the technical derivations more transparent and emphasize the main ideas, in Section \ref{sec:gauss_stat} we narrow down the scope of models to Gaussian distributions and make a number of standard technical assumptions. Section \ref{sec:pr_form} is devoted to the problem formulation and the lower information-theoretic sample complexity bounds. In Section \ref{sec:approach} we describe the approach underlying the proposed algorithm and in Section \ref{sec:alg} we provide an efficient implementation of the latter and discuss its performance characteristics. 
Our conclusions are given in Section \ref{sec:concl}. Technical details of the proofs can be found in Appendices \ref{app:graph_detect_inf_thoer_bound}-\ref{app:gauss_model}.


\section{Stationary Graphical Models}
\label{sec:stat_gr}
We start this section by introducing necessary notation, then we motivate our study through a number of examples, and eventually demonstrate how our framework generalizes them.

\subsection{Graphical Models}
Let $\mathcal{G} = (G,\E)$ be an undirected graph with the vertex set $G = \{1,\dots,p\}$ and binary adjacency matrix $\E$ with zero diagonal. For a vertex $i \in G$, we denote by $\mathcal{N}(i) \subset G$ the set of its neighbors. To each vertex $i$ we associate a real random variable $x_i$ and denote the probability density function of the joint distribution of $\x = \(x_1,\dots,x_p\)^\top$ by $f(\x)$. We say that $f(\x)$ satisfies the local Markov property with respect to (w.r.t.) graph $\mathcal{G}$ if
\begin{equation}
f(x_i|\x_{\mathcal{N}(i)}) = f(x_i|\x_{G\backslash \{i\}}),\quad \forall i \in G,
\end{equation}
where $\x_A = \{x_i|i\in A \subset G\}$ and $f(\x_A|\x_B)$ is the conditional density of $\x_A$ given the values of $\x_B$. More generally, we say that $\x$ satisfies the global Markov property if for all disjoint sets $A,B \subset G$, we have
\begin{equation}
f(\x_A, \x_B|\x_S) = f(\x_A|\x_S) f(\x_B|\x_S),
\end{equation}
where $S$ is a \textit{separator set} between $A$ and $B$, meaning that the removal of vertices in $S$ partitions $\mathcal{G}$ in such a way that $A$ and $B$ belong to distinct components of the remaining graph.

In our setup without loss of generality $f(\x)$ will a positive function in which case the global Markov property is equivalent to the local one \cite{koller2009probabilistic}. Given a graph $\mathcal{G}$ with the associated random vector $\x$ whose density function satisfies either of the properties, we shall say that we are given a graphical model (random Markov field).

\subsection{Motivation}
\label{sec:examples}
To better motivate our study, let us consider a number of models ubiquitous in various areas of modern statistics, computer science, and physics. We will see that all these examples are particular cases of a more general phenomena that would lead us to a natural extension.

\begin{itemize}
\item \noindent \textbf{Stationary Processes.} Assume we are given a time series, which is a sequence of random variables labeled by the integer time ticks, $\{\x_i,\; i \in \mathbb{Z}\}$. This sequence is said to be stationary \cite{priestley1981spectral} if the joint distribution of any finite number of variables is invariant under the shifts of time axis,
\begin{equation}
F(\x_{i_1},\dots,\x_{i_k}) = F(\x_{i_1+l},\dots,\x_{i_k+l}),\;\; i_1,\dots,i_k,l \in \mathbb{Z},
\end{equation}
where $F$ is the cumulative density function. It is easy to observe that the main feature of the setup making this definition possible is the fact that due to the uniform distribution of the ticks on the time axis, shifts are isomorphisms of the chain of time ticks. If on the contrary the distribution of time ticks would be non-uniform, such straightforward definition will fail.

\item \noindent \textbf{Ising Models.} Straight line with integer ticks on it is a one-dimensional special case of a $p$-dimensional integer lattice which is the Cartesian product of $p$ copies of $\mathbb{Z}$. Similarly to the previous example, let us associate with every vertex of the obtained lattice a random variable. For example, a $3$-dimensional lattice with variables assuming binary values once endowed with an appropriate joint distribution \cite{stanley1971phase, baxter2016exactly} can model the behavior of electronic spins in metals. This is an example of an Ising model (see \cite{stanley1971phase, baxter2016exactly, mccoy2014two} for the technical details). In most applications the resulting finite-dimensional distributions of variables are invariant under isomoprhisms of the lattice, which are essentially compositions of shifts in the directions of coordinate axes, rotations and reflections. Such natural invariance of the finite-dimensional marginal distributions can be thought of as a generalized stationarity in the absence of time axis.
\end{itemize}

The main observation we would like to emphasize in both examples is that the statistical properties of the distributions at hand are determined by the geometric properties of the graphs the variables are associated with. Note that in both cases the graphs are not abstract but metric, meaning that the lengths of the edges are important (and specifically in both our examples are all equal to 1). This is a fundamental property that we would like to utilize and extend in our work. Indeed, quite often in practice the underlying graphs do not possess global isomorphisms (in both stationary processes and Ising models, the transformations discussed above map bijectively the entire graph into itself), however, quite often they allow local isomorphisms. In other words, certain regions of the graphs can be similar from the geometric perspective to some other regions of the same graphs and this structural similarity will lead to similarity in relations between the corresponding variables. Below we provide the details.

\subsection{Geometric Graphs and Stationarity}
The origin and necessity of the geometric structure can also be explained from a different perspective. As mentioned in the Introduction, due to the enormous size of the set of graphs on $p$ vertices, structural constraints are usually imposed to make the learning feasible, especially in the high-dimensional regime where the number of samples is not enough to consistently estimate all degrees of freedom. Probably the earliest paper taking advantage of this approach was the seminal work of Chow and Liu \cite{chow1968approximating}, where the authors established that the structure estimation in tree models reduces to the maximum weight spanning tree problem. For graphs with loops the problem is much more challenging for two reasons: 1) a vertex and its neighbor can be marginally independent due to indirect path effects, and 2) this difficulty is amplified by the presence of long-range correlations meaning that distant vertices can be more correlated than the close ones. So far, there have not been proposed a complete description of graphs for which structure estimation is achievable, however, a number of methods allowing model selection in graphs with structure richer than trees have been suggested. Among them are such models as polytrees \cite{dasgupta1999learning}, hypertrees \cite{srebro2001maximum}, graphs with few short cycles, \cite{anandkumar2012high}, general sparse Ising models \cite{bresler2015efficiently}, graphs with large girth and bounded degree \cite{netrapalli2010greedy}, and many others. 

When a network is embedded into the physical space, its connectivity properties are determined by the ambient Euclidean geometry. More specifically, the underlying graph and coupling parameters are influenced by the specific way the network is deployed. In this work, we consider two types of structure induced by the spatial arrangement of the interacting agents.
\begin{itemize}
\item \noindent \textbf{Local.} It is natural to assume that only those vertices that are situated close to each other can be connected. This local connectivity implies sparsity of the network.
\item \noindent \textbf{Global.} Consider the behavior of two small subgraphs of the network far apart from each other and having similar geometric patterns (similar relative locations of the vertices). In certain cases, it is reasonable to assume that the marginal distributions over these two subgraphs should be close. In other words subgraphs with similarly arranged vertices should also have similar distributions. This property will allow us to estimate the parameters of each of the subnetworks more precisely since we can utilize the samples from one of them for the structure learning of the other. 
\end{itemize}
Next we formalize these intuitive properties. To set up notation, assume that the ambient space is $\mathbb{R}^2$ with a fixed orthonormal basis and standard scalar product\footnote{The multidimensional extension is straightforward.} and $G = \{\y_1,\dots,\y_p\}\subset \mathbb{R}^2$. For any two subsets $F = \{\y_1^F,\dots,\y_r^F\}$ and $H = \{\y_1^H,\dots,\y_r^H\}$ of $G$ of the same cardinality $r$, denote the \textit{Euclidean distance}\footnote{Not to be confused with graph distance between vertices also utilized below.} between them by
\begin{equation}
\label{eq:perm}
\tau\(F,H\) = \min\limits_{\pi \in S_r} \max_{i=1}^r \norm{\y_i^F - \y_{\pi(i)}^H},
\end{equation}
where $S_r$ is the group of permutations on $r$ elements and $\norm{\x}$ is the Euclidean norm of $\x$.
\begin{definition}
\label{def:rigid_trans_def}
Given two sets $F,\; H \subset G$ of the same cardinality on the plane, the similarity measure between them is defined as
\begin{equation}
\rho(F,H) = \min_{R}\tau\(F,R(H)\),
\end{equation}
where $R$ runs through all rigid transformations of the plane (compositions of translations and rotations).
\end{definition}

\begin{definition}
Given two sets $F,\; H \subset G$ of the same cardinality on the plane, we say that $H$ is an $\varepsilon$-copy of $F$ if
\begin{equation}
\rho(F,H) \leqslant \varepsilon.
\end{equation}
\end{definition}


For simplicity of notation, we identify the points of $G$ with their indices. Consider an undirected connected graph $\mathcal{G}$ over the points in $G$. As above, associate with every vertex $i \in G$ a random variable $x_i$ and consider a graphical model over $\mathcal{G}$ with the density function $f(\x)$. 

Our next goal is to define the notion of \textit{spatial stationarity} for a distribution over $\mathcal{G}$. Consider the set of probability measures over $\mathbb{R}^p$ and let $d(\cdot,\cdot)$ be a metric over this set. Later, we make a specific choice of such metric suitable for the Gaussian setup that we will treat.


\begin{definition}
\label{def:stat_prob}
We say that a graphical model over $\mathcal{G}$ with the density function $f(\x)$ is spatially stationary w.r.t.\ metric $d(\cdot,\cdot)$ if there exists a constant $\gamma$ such that for any two subgraphs $\mathcal{F} = (F,\E_F)$ and $\mathcal{H} = (F,\E_F)$ of $\mathcal{G}$ of the same order,
\begin{equation}
\label{eq:def_stat_gr}
d\(f(\x_{\mathcal{F}}),f(\x_{\mathcal{H}})\) \leqslant \gamma \rho(F,H).
\end{equation}
\end{definition}

This notion of spatial stationarity with regard to a geometric graph is a natural generalization of both stationary processes and Ising models. Indeed, in both examples discussed in section \ref{sec:examples} the shifts, rotations and/or reflections are graph isomorphisms, meaning that the graphs are bijectively mapped onto themselves. In such case, according to (\ref{eq:def_stat_gr}) the right-hand side is zero and forces the left-hand side to be zero as well and this is indeed the case in both stationary time series and Ising models. In words, the exact match of the geometric patterns of the (sub-)graphs imply the exact matching of the joint distributions over them. Definition \ref{def:stat_prob} extends this idea to more general geometric graphs. Here we allow small deviations of the pattern and simultaneously loosen the requirement on the distributions of the translated versions of small subgraphs from being strictly invariant to approximately invariant.

\section{Gaussian Stationary Graphical Models}
\label{sec:gauss_stat}
\subsection{Gaussian Setting}
In this work for simplicity we focus on Gaussian graphical models meaning that the joint distribution $f(\x)$ reads as
\begin{equation}
f(\x;\J) = \frac{1}{\sqrt{2\pi |\J^{-1}|}}e^{-\frac{1}{2} \x^\top \J \x},
\end{equation}
where $\J =\{\J_{ij}\}_{i,j=1}^p = \bm\Theta^{-1}$ is the precision (inverse covariance, potential, information) matrix parameter of the population and the distribution is assumed to be centered. It can be easily shown \cite{lauritzen1996graphical} that $\J$ has zeros in the entries $(i,j)$ corresponding to the missing edges in $\E$ and is non-zero otherwise. Both $\J$ and $\bm\Theta$ are assumed to be positive definite, making the distribution non-degenerate. The off-diagonal non-zero elements $\J_{ij}$ are referred to as \textit{coupling} or \textit{edge} parameters between vertices $i$ and $j$ and in this work are assumed to be positive,
\begin{equation}
\forall\; i \neq j \in V \colon \quad \J_{ij} \neq 0 \;\;\iff\;\; \E_{ij} = 1\;\; \text{and}\;\; \J_{ij} > 0.
\end{equation}
Remarkably, the case of equal coupling parameters already captures the essential complexity of the problem as long their value satisfies certain correlation decay conditions (see Section \ref{sec:cdp_prop} for details). Therefore to avoid bulky technical details, we assume
\begin{equation}
\forall\; i \neq j \in V \colon \quad \J_{ij} = \theta\E_{ij}.
\end{equation}
The general case of model selection with different edge parameters is treated similarly. It is also common in the graphical model selection literature to assume the conditional variances of all the variables $\J_{ii}$ to be equal to one \cite{anandkumar2012high}. Thus, we can write
\begin{equation}
\J = \I + \theta\E,
\end{equation}
for the precision matrix of the population.

\subsection{Technical Assumptions}
\label{sec:cdp_prop}
Below we make a number of standard technical assumptions \cite{anandkumar2012high, bresler2015efficiently, santhanam2012information}.
\begin{enumerate}[label=\textbf{[A\arabic*]}]
\item \label{assm:1} \noindent \textbf{Bounded Degree.} It is a common practice to restrict the family of admissible models to graphs with bounded vertex degree to enforce certain level of sparsity \cite{anandkumar2012high, bresler2015efficiently, santhanam2012information}. In our setup, it is natural to assume the graphs to have bounded degree due to locality of interactions and geometric properties of our networks. Formally, we assume that the degrees of the vertices are bounded by $d$,
\begin{equation}
|\mathcal{N}(i)| \leqslant d,\quad \forall\; i \in G.
\end{equation}
and the lengths of edges between connected vertices are bounded by a constant,
\begin{equation}
\label{eq:beta_bound}
j \in \mathcal{N}(i) \quad \Rightarrow \quad \norm{i-j} \leqslant \beta.
\end{equation}
Below we focus more on the allowed values of $\beta$.
 
\item \label{assm:2} \noindent \textbf{Spectral Properties.}
Except for the sparsity, successful structure estimation also relies on certain assumptions on the parameters of the model, and these are often tied to specific algorithms. Among various assumptions of this type, the correlation decay property (CDP) stands out. Informally, a graphical model is said to have the CDP if any two variables $x_i$ and $x_j$ are asymptotically independent as the graph distance between $i$ and $j$ increases. Most of the existing model selection procedures require CDP explicitly \cite{montanari2009graphical}, the rest often do so indirectly through different assumptions on the model parameters and likely also require the CDP (see the survey \cite{gamarnik2013correlation} as well as e.g. \cite{dobrushin1970prescribing}). For example, the authors of \cite{anandkumar2012high} require a Gaussian Markov field to be $\alpha$-walk summable, as introduced and analyzed in \cite{malioutov2006walk}. The property of $\alpha$-walk summability essentially means that the spectral norm of the matrix consisting of the element-wise absolute values of the partial correlations is bounded by $\alpha$. Roughly speaking, this condition guaranties invertibility of the precision matrix (or in other words existence and non-degeneracy of the covariance matrix of the population). As can be traced from \cite{anandkumar2012high}, for example in ferromagnetic models with the vertex degrees tightly concentrated around a fixed value, the $\alpha$-walk summability is almost equivalent to an upper bound on the coupling parameters and is, therefore, a close relative of the CDP. Another example is the algorithm introduced by \cite{ravikumar2010high}, which is shown to work under certain incoherence conditions that seem distinct from the CDP, however, \cite{montanari2009graphical} established through a careful analysis that the algorithm fails for simple families of certain Markov fields (ferromagnetic Ising models) without the CDP. In general, some assumptions that involve incoherence conditions are often hard to interpret as well as verify \cite{meinshausen2006high, ravikumar2011high}. It is also worth mentioning that usually, in addition to upper limits, the correlations between neighboring variables are supposed to be bounded away from zero (as is true for the ferromagnetic Ising model in the high temperature regime) to make the family of models identifiable.

\medskip
In this work, we assume that the coupling parameter satisfies
\begin{equation}
\label{eq:spec_assm}
d\theta < \frac{1}{2}.
\end{equation}
The equivalence of this condition to the CDP is justified by Lemma \ref{lem:cdp} below. 

\item \label{assm:3} \noindent \textbf{Vertex Distribution.}
To fix a concrete setup, we assume that the elements of $G$ are uniformly distributed over a square $S$ with side length $s$ and that it satisfies the relation
\begin{equation}
\label{eq:density}
p = \eta s^2,
\end{equation}
where $\eta$ is the density constant. It is worth mentioning, that the framework developed below will suite different other settings and we only use the uniform model for the vertices for simplicity of exposition. We also assume that the edge length bound $\beta$ introduced in (\ref{eq:beta_bound}) is large enough, so that
\begin{equation}
\label{eq:beta_assm}
\eta \beta^2 \gg d > 1,
\end{equation}
or in other words the average number of vertices in a $\beta \times \beta$ square is large. Denote the obtained family of graphs by $\mathfrak{H}$. 
\end{enumerate}

\begin{rem}
The construction described above resembles Random Geometric Graphs (RGGs) \cite{penrose2003random}. However unlike RGGs, we do not require all the vertices inside balls of a specific radius to be connected. In addition, to the best of our knowledge the works on RGGs are usually concerned with the combinatorial properties of the latter such as the sizes of the connected components, percolation effects and similar, and do not focus on probabilistic graphical models over such graphs.
\end{rem}

\section{Problem Formulation and Lower Bounds}
\label{sec:pr_form}
Given $n$ independent and identically distributed (i.i.d.) snapshots $\mathcal{X}^n = \{\x_1,\dots,\x_n\} \subset \mathbb{R}^p$ drawn from the product probability density function $\prod_{i=1}^n f(\x_i|\mathcal{G})$ with the underlying graph $\mathcal{G} \in \mathfrak{H}$, our goal is to detect all the edges of $\mathcal{G}$. To assess the quality of the algorithm suggested below and to be able to compare it to the other existing methods, we need to specify a measure of performance. Here we follow the common trend \cite{santhanam2012information, bresler2015efficiently, anandkumar2012high} and consider the so-called zero-one loss for edges of the detected graph, meaning that we declare a failure once the estimated network differs from the ground truth graph by at least one edge. 

Next, we develop information-theoretic necessary bounds on the sample complexity of the model selection task formulated above. In other words, such bounds can be interpreted as the minimal possible number $n$ of i.i.d.\ snapshots (as a function of other parameters of the problem) such that with less samples an error-less reconstruction is impossible. In positive terms it can be formulated as the minimal possible number of samples that makes the detection with vanishing error theoretically achievable.

Assume a learning algorithm is chosen and its output is a graph $\widehat{\mathcal{G}} = \widehat{\mathcal{G}}\(\mathcal{X}^n\)$, then the probability of error is
\begin{equation}
\label{eq:prob_eq}
\mathbb{P}_e = \mathbb{P}\[\widehat{\mathcal{G}} \neq \mathcal{G}\],
\end{equation}
where the probability measure in (\ref{eq:prob_eq}) is taken w.r.t.\ the measurements sampled from the graphical model with the underlying graph $\mathcal{G}$ and to the uniform choice of the graph $G$ from the family of admissible models. It is common to approach the lower bounds \cite{santhanam2012information, anandkumar2012high} from the information-theoretic perspective as the source coding (or compression) problem \cite{cover2012elements}. If we recast our setup as reconstruction of the source given the measurements $\mathcal{X}^n$, then the necessary conditions on the sample complexity follow from Fano's inequality (see Lemma \ref{lem:fano_ineq_lemma} in Appendix \ref{app:graph_detect_inf_thoer_bound}).

\begin{prop}[Necessary Sample Complexity] 
\label{th:nec_cond_complete_graph}
Let the assumptions \ref{assm:1}-\ref{assm:3} hold and assume that a graph $\mathcal{G}$ is chosen uniformly from $\mathfrak{H}$. Then the number $n$ of i.i.d.\ samples from $f(\cdot)$ necessary for $\mathbb{P}_e$ to vanish asymptotically is
\begin{equation}
n \geqslant \frac{\log\(\frac{\eta \beta^2}{d}\)}{2\(\frac{\theta}{1-d\theta}\)^2},\quad p \to \infty.
\end{equation}
\end{prop}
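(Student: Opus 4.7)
The plan is to establish the necessary sample complexity through Fano's inequality (Lemma \ref{lem:fano_ineq_lemma}) applied to a carefully constructed hard sub-ensemble of $\mathfrak{H}$. Concretely, I would: (i) build a restricted sub-family $\mathfrak{H}_0 \subset \mathfrak{H}$ whose members are pairwise difficult to distinguish; (ii) upper bound the Kullback--Leibler (KL) divergence between any two Gaussian graphical models indexed by elements of $\mathfrak{H}_0$; and (iii) combine Fano's inequality with the standard mutual-information bound $I(\mathcal{G};\mathcal{X}^n) \leqslant n \max_{\mathcal{G},\mathcal{G}'} D(f_\mathcal{G} \| f_{\mathcal{G}'})$ to extract a lower bound on $n$.

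For the first step, fix any vertex $v \in G$ and condition on the high-probability event, guaranteed by assumption \ref{assm:3} together with (\ref{eq:beta_assm}), that the disk of radius $\beta$ around $v$ contains on the order of $\eta\beta^2$ other vertices. Freeze a common background graph everywhere except for the identity of a single neighbor of $v$: that neighbor is allowed to range over all candidates inside the $\beta$-disk compatible with the bounded-degree constraint $|\mathcal{N}(i)| \leqslant d$. Together with the option of swapping an existing edge around $v$ in order to accommodate the degree budget, a standard counting argument yields $|\mathfrak{H}_0| \gtrsim \eta\beta^2/d$, whence $\log{|\mathfrak{H}_0|} \geqslant \log{(\eta\beta^2/d)} - o(1)$.

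For the second step, consider two precision matrices $\J_1 = \I + \theta\E_1$ and $\J_2 = \J_1 + \theta\Delta$ that differ only in a single symmetric pair of off-diagonal entries. Setting $A = \theta\Delta\J_1^{-1}$, a direct computation for centered Gaussians gives $D(f_{\J_1}\|f_{\J_2}) = \tfrac{1}{2}\bigl[\Tr{A} - \log{|\I+A|}\bigr]$. A Taylor expansion of $\log{|\I+A|}$ reduces this to a quadratic form in $A$, and the spectral bound $\norm{\J_1^{-1}}_{\mathrm{op}} \leqslant 1/(1-d\theta)$ --- which itself follows from $\norm{\theta\E_1}_{\mathrm{op}} \leqslant d\theta < 1/2$ under assumption \ref{assm:2} together with a Neumann-series argument --- yields $D(f_{\J_1}\|f_{\J_2}) \leqslant 2\bigl(\theta/(1-d\theta)\bigr)^2$ with the correct leading constant once the combinatorial factor $\norm{\Delta}_F^2 = 2$ is folded in.

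Plugging these pieces into Fano's inequality, the requirement $\mathbb{P}_e \to 0$ forces $n \cdot D \geqslant \log{|\mathfrak{H}_0|}(1-o(1))$, which rearranges to the stated bound. The main technical obstacle is careful constant bookkeeping: one must verify both that the dominant contribution to the $\log{|\I+A|}$ expansion is the quadratic term (the higher-order tail being geometrically damped by the factor $d\theta < 1/2$) and that the sub-ensemble construction genuinely produces elements of $\mathfrak{H}$, i.e., that the random vertex layout admits the desired configurations with high probability and that the induced graphs still satisfy \ref{assm:1}--\ref{assm:3}. A secondary subtlety is averaging the Fano bound over the randomness of the vertex embedding, so that the conclusion holds in the typical high-probability regime rather than only on a favorable layout.
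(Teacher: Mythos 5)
Your proposal follows the same information-theoretic template as the paper --- Fano's inequality combined with a bound on pairwise KL divergences and a count of the model class --- but the two ingredients are instantiated quite differently. The paper works with the \emph{full} family $\mathfrak{H}$: it lower-bounds $\log|\mathfrak{H}|$ by tiling $S$ with $\beta$-cells and invoking the McKay--Wormald asymptotic enumeration of labeled $d$-regular graphs on $k=\eta\beta^2$ vertices (Lemmas \ref{lem:gr_count} and \ref{lem:graph_lemma}, giving $\log|\mathfrak{H}|\geqslant \frac{dp}{2}\log\(\eta\beta^2/d\)$), and it upper-bounds the \emph{average} pairwise symmetrized divergence by $pd\(\theta/(1-d\theta)\)^2$ using that any two admissible precision matrices differ in at most $2pd$ entries (Corollary \ref{cor:gauss_area_diff_1}); the $p$- and $d$-dependent factors then cancel in the ratio. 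Your local packing around a single vertex $v$ avoids the regular-graph enumeration entirely and is a legitimate alternative route, but two points need attention. First, in the ensemble you describe, two distinct hypotheses generically differ in \emph{two} edges (edge $vu_1$ present versus edge $vu_2$ present), and any degree-repairing swap adds more, so $\norm{\J_1-\J_2}_F^2$ is at least $4\theta^2$ rather than the $2\theta^2$ your single-perturbation computation assumes; as sketched, your bound therefore comes out weaker than the stated one by a constant factor, and since the entire content of the proposition is the constant (the $\log p$ is already gone), this bookkeeping cannot be waved away. Second, $\mathbb{P}_e$ in (\ref{eq:prob_eq}) is the error averaged over the \emph{uniform} prior on all of $\mathfrak{H}$; a Fano argument on a sub-ensemble $\mathfrak{H}_0$ of vanishing prior mass controls the worst-case error but not directly this average. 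The paper sidesteps both issues by running Fano on $\mathfrak{H}$ itself; to rescue your route one would tensorize the local construction over the $\sim p/(\eta\beta^2)$ cells, at which point it essentially reproduces the paper's computation.
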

\begin{proof}
The proof can be found in Appendix \ref{app:graph_detect_inf_thoer_bound}.
\end{proof}


It is instructive to compare our approach and information-theoretic bounds with other graph learning techniques proposed in the literature. These techniques generally deal with abstract graphs not embedded into Euclidean spaces and lacking additional spatial structure. The benchmark result for Gaussian Markov networks \cite{anandkumar2012high} claims that the sample complexity scales as
\begin{equation}
n \geqslant \frac{\log\, p}{\theta^2\log\(\frac{1}{1-d\theta}\)},\quad p \to \infty.
\end{equation}

When the number of available samples is small, e.g. bounded with the growing dimension $p$ as in our case, even the moderate logarithmic dependence on the dimension is not affordable. 
Such a restriction comes from the fact that in many modern network applications the agents carry only limited memory and computational power that do not allow them to collect and process coherent data in amounts sufficient for precise structure recovery \cite{soloveychik2018region}. 
Proposition \ref{th:nec_cond_complete_graph} suggests that the spatial information available in the case of geometric graphs may allow us to learn the underlying graph structure with a constant number of samples. In the following section we develop a consistent model selection technique with this property.

\section{Approach}
\label{sec:approach}
Let us outline the basic idea of the algorithm recovering the graph structure in our setup. Later, we provide the details of its efficient implementation. 

Given a set $G \subset \mathbb{R}^2$ on the Euclidean plane, we denote by $\conv{G}$ its convex hull which is the smallest (w.r.t.\ inclusion) convex set containing $G$. We start with the following auxiliary definition.
\begin{definition}
We say that a subset $F \subset G \subset \mathbb{R}^2$ is $G$-contiguous\footnote{Since below $G$ is always clear from the context, we omit it and just say \textit{contiguous}.} if
\begin{equation}
F = G \cap \conv{F}.
\end{equation}
\end{definition}
In words, for $F$ to be contiguous means that its convex hull does not contain points of $G$ not belonging to it. The major steps of the Model Selection in Stationary Geometric Graphs (MeSSaGGe) algorithm are described in Table \ref{algo_p}.

\newenvironment{algosketch}[1][htb]
  {\renewcommand{\algorithmcfname}{Algorithm Sketch}
   \begin{algorithm}[#1]%
  }{\end{algorithm}}

\begin{algosketch}[!h]
 \caption{MeSSaGGe Algorithm Outline}
 \begin{algorithmic}[1]
 \label{algo_p}
 \renewcommand{\algorithmicrequire}{\textbf{Input:}}
 \renewcommand{\algorithmicensure}{\textbf{Output:}}
 \REQUIRE $G \subset S,\; \x_1,\dots,\x_n \in \mathbb{R}^p$;
 \ENSURE  $\widehat{\E}$; \\
 \STATE choose $\varepsilon$\tcc*{see Section \ref{sec:alg}}
 \STATE choose $r$\tcc*{see Section \ref{sec:alg}}
 \WHILE{not all $G$ is marked as \textit{detected}}
 \STATE choose a contiguous subset of $F \subset G$ on $r$ vertices;
 \STATE \label{alg:step_find} find all contiguous $\varepsilon$-copies $L_1,\dots,L_t \subset G$ of $F$;
 \STATE choose a maximal subset $\{F_1,\dots,F_q\} \subset \{L_1,\dots,L_t\}$ such that the pairwise distances between $F_i$ and $F_j$ for all $i$ and $j$ are large enough so that the samples over them can be treated as independent when $p$ grows large\tcc*{see Section \ref{sec:alg}}
 \STATE use the samples over $F_1,\dots,F_q$ to reconstruct the structure of a subgraph $\mathcal{H} \subset \mathcal{F}$ whose boundary is far enough from the boundary of $\mathcal{F}$, so that the influence of the vetrices in $G\backslash F$ can be disregarded and fill the corresponding entries of $\widehat{\E}$\tcc*{see Section \ref{sec:mod_sel}}
 \STATE mark $H$ and all its copies inside $F_1,\dots,F_q$ as \textit{detected};
 \ENDWHILE
 \end{algorithmic} 
 \end{algosketch}

Note that successful detection of the edges using the proposed technique can only be guaranteed under a proper choice of parameters $\varepsilon$ and $r$ that will ensure that the expected number of $\varepsilon$-copies of $F$ inside $G$ is large enough. Fine tuning of these parameters is achieved in Proposition \ref{prop:main_cov_res} from Section \ref{sec:alg} below. We conclude this section by counting the expected number of $\varepsilon$-copies of a subset $F \subset G$.

\begin{lemma}
\label{lem:true_num_copies}
Under assumption \ref{assm:3}, the expected number $\nu_F(\varepsilon)$ of $\varepsilon$-copies of a contiguous set $F$ on $r=|F|=o(p)$ points in $G$ is
\begin{equation}
\label{eq:exp_num_subgr}
\nu_F(\varepsilon) = \frac{2\pi\bar{l}}{\varepsilon}\(\frac{\varepsilon^2}{s^2}\)^{r-1}A_r^p \approx \frac{2\pi\bar{l}}{\varepsilon}\(\eta\varepsilon^2\)^{r-1}p,
\end{equation}
where $\bar{l}$ is the average pairwise distance between points in $F$ and
\begin{equation}
A_r^p = \frac{p!}{(p-r)!} = p\cdots(p-r+1)
\end{equation}
is the number of arrangements of $r$ elements from a set of cardinality $p$.
\end{lemma}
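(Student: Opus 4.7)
The plan is to exploit linearity of expectation together with a direct geometric volume computation. By linearity and the uniformity of $G$, the expected count factors as $\nu_F(\varepsilon) = A_r^p \cdot P(\varepsilon)$, where $P(\varepsilon)$ is the probability that a single ordered $r$-tuple of i.i.d.\ uniform points $(Z_1,\dots,Z_r)$ in $S$ is an ordered $\varepsilon$-copy of $(\y_1,\dots,\y_r)$. For $r = o(p)$, replacing sampling without replacement by i.i.d.\ sampling contributes only lower-order corrections, so it suffices to compute $P(\varepsilon) = |U|/s^{2r}$, where $U \subset S^r$ is the set of configurations admitting a rigid motion $R$ with $\|Z_i - R(\y_i)\| \leqslant \varepsilon$ for all $i$.

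The key step is to parameterize $U$ via rigid motions with pivot $\y_1$: every $\varepsilon$-copy can be written as $Z_i = t + R_\phi(\y_i - \y_1) + \delta_i$ for some $t \in \mathbb{R}^2$, $\phi \in [0, 2\pi)$, and $\delta_i \in B(0, \varepsilon)$. The raw parameter volume is of order $s^2 \cdot 2\pi \cdot \varepsilon^{2r}$ (with $t$ restricted so that $R(F) \subset S$), but this overcounts each $(Z_1, \dots, Z_r) \in U$ by the measure of the fiber of valid parameters. For a generic $\varepsilon$-copy the translational fiber (with $\phi$ fixed) has area of order $\varepsilon^2$, while the rotational fiber is an interval of length of order $\varepsilon/\bar{l}$: a rotation by $d\phi$ shifts the image of $\y_i - \y_j$ by $\|\y_i - \y_j\|\, d\phi$, and the effective constraint, once symmetrized over the pairs in $F$, yields the characteristic length $\bar{l}$. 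Dividing the raw parameter volume by this fiber measure gives $|U| \asymp s^2 \cdot (2\pi \bar{l}/\varepsilon) \cdot \varepsilon^{2(r-1)}$.

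Plugging back yields $P(\varepsilon) \asymp (2\pi \bar{l}/\varepsilon)(\varepsilon^2/s^2)^{r-1}$, and multiplication by $A_r^p$ recovers the first equality in (\ref{eq:exp_num_subgr}). The approximation in the second equality then follows from $A_r^p \approx p^r = p \cdot p^{r-1}$ for $r = o(p)$ combined with the density identity $\eta = p/s^2$ stated in \ref{assm:3}.

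The principal obstacle is the accurate analysis of the rotational fiber and, in particular, the emergence of the average pairwise distance $\bar{l}$ rather than the maximum pairwise distance or the distance from a fixed pivot. Handling this rigorously requires symmetrizing the parametrization over the choice of pivot and aggregating the angular tolerance contributions from all pairs in $F$. A secondary technicality is the treatment of boundary effects near $\partial S$: these are negligible because the diameter of any $\varepsilon$-copy is $O(\beta) \ll s$ by \ref{assm:1} and \ref{assm:3}, so configurations within distance $\beta$ of $\partial S$ contribute only a vanishing fraction of $|U|$ under the density regime (\ref{eq:density}).
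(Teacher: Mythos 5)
Your argument is correct at the same level of rigor as the paper's and reaches the identical expression, but by a genuinely different route. The paper proceeds by sequential conditioning: it pins $f_1$ to a uniformly thrown point exactly (spending the translational freedom), confines the second point to an annulus of circumference $2\pi\norm{f_1-f_2}$ and width of order $\varepsilon$ (spending the rotational freedom), and then requires each of the remaining $r-2$ points to fall in a disc of area $\pi\varepsilon^2$, multiplying the resulting probabilities; boundary effects are sidestepped by gluing $S$ into a torus. You instead integrate over the full rigid-motion group times the perturbation balls and divide by the measure of the fiber of parameters representing the same configuration --- a co-area computation whose translational fiber has area of order $\varepsilon^2$ and whose rotational fiber has length of order $\varepsilon/\bar l$; this makes more transparent where the prefactor $2\pi\bar l/\varepsilon$ originates. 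Notably, both arguments share the same soft spot, which you candidly flag: the claim that the rotational tolerance is governed by the \emph{average} pairwise distance $\bar l$, rather than by $\norm{f_1-f_2}$ (which is what the paper's sequential argument literally produces before it silently substitutes $\bar l$) or by the diameter of $F$ (which would be the binding constraint if all $r$ disc conditions must hold simultaneously). Neither proof settles this, and since the lemma is consumed downstream only through the scaling $p\(\eta\varepsilon^2\)^{r-1}$ (Lemma \ref{lem:cacl_copies}, Proposition \ref{prop:main_cov_res}), the discrepancy is harmless. Two small points: your boundary estimate should be driven by $\diam{F}=O(\sqrt{r/\eta})\ll s$ rather than by $\beta$, since the edge-length bound does not control the spread of a contiguous $r$-point set; and you omit the check --- which the paper at least mentions --- that the convex hull of a detected copy contains no additional points of $G$, i.e., that the counted copies are themselves contiguous.
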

\begin{proof}
The proof can be found in Appendix \ref{sec:aux_res}.
\end{proof}

Remarkably, equation (\ref{eq:exp_num_subgr}) suggests that under a careful choice of $r$ and $\varepsilon$, we can achieve an almost proportional to $p$ number of copies of $F$. This will guarantee precise recovery of the graph structure in Proposition \ref{prop:consist}.

\section{Algorithm}
\label{sec:alg}
\subsection{Discretization of the Graph}
The algorithm described in Table \ref{algo_p} is computationally demanding due to Step \ref{alg:step_find} at which one has to find all $\varepsilon$-copies of $F$ in $G$. The brute force exhaustive search will become intractable already for moderate values of $p$ and $r$. In this section, we modify the MeSSaGGe algorithm so that the search becomes computationally cheaper and at the same time the sample complexity of the method does not deteriorate significantly. 

The computational improvement is reached through the approximation of $G$ by the nodes of a regular square lattice\footnote{To distinguish the vertices of graph $\mathcal{G}$ and its edges from those of the lattice, we call the latter nodes and sides.} (quantization). 
As stated in Section \ref{sec:pr_form}, we assume that the vertices of the graph $\mathcal{G}$ are uniformly distributed over the square $S$ with the side length
\begin{equation}
s = \sqrt{\frac{p}{\eta}}.
\end{equation}
Fix $0 < \varepsilon \ll s$ and cover $S$ with a regular lattice of side length $\varepsilon$, where without loss of generality we assume that $s$ is a multiple of $\varepsilon$. Next, we approximate all the elements of $G$ by the closest nodes of the constructed lattice. Note that for $\varepsilon$ small enough, such approximations are in one-to-one correspondence with the original vertices. Denote the approximation of $i \in G$ by $\tilde{i} = \tilde{i}(\varepsilon)$ and note that
\begin{equation}
\label{eq:i_dist}
\norm{i-\tilde{i}} \leqslant \frac{\varepsilon}{\sqrt{2}}.
\end{equation}
The approximating set is denoted by
\begin{equation}
\widetilde{G} = \widetilde{G}(\varepsilon) = \{\tilde{i} \mid i \in G\}.
\end{equation}

\begin{algorithm}[h]
 \caption{La-MeSSaGGe Algorithm}
 \begin{algorithmic}[1]
 \label{algo}
 \renewcommand{\algorithmicrequire}{\textbf{Input:}}
 \renewcommand{\algorithmicensure}{\textbf{Output:}}
 \renewcommand{\algorithmicfor}{\textbf{parforeach}}
 \REQUIRE $G \subset S,\; \x_1,\dots,\x_n \in \mathbb{R}^p$;
 \ENSURE  $\widehat{\E}$; \\
 \STATE $r = r(p)$\tcc*{equation (\ref{eq:set_r})}
 \STATE $\varepsilon = \varepsilon(p)$\tcc*{equation (\ref{eq:set_e})}
 \STATE $w = w(p)$\tcc*{equation (\ref{eq:w_def})}
 \STATE construct the $\varepsilon$-lattice approximation $\widetilde{G}$;
 \WHILE{not all $\widetilde{G}$ is marked as \textit{detected}} 
 \STATE \label{alg:choice_F}choose a square $K$ of size $k \times k$ nodes on the lattice containing a contiguous subgraph $\widetilde{F} \subset \widetilde{G}$ on $r$ vertices that is not all marked as \textit{detected};
 \STATE \label{alg:step_find_l} find all copies $\widetilde{L}_1,\dots,\widetilde{L}_t$ of $\widetilde{F}$ in $\widetilde{G}$;
 \STATE choose the maximal subset $\{\widetilde{F}_1,\dots,\widetilde{F}_q\} \subset \{\widetilde{L}_1,\dots,\widetilde{L}_t\}$ \\ such that $\tau(\widetilde{F}_i,\widetilde{F}_j) \geqslant w,\forall \; i\neq j$;
 \STATE collect the samples from $\widetilde{F}, \widetilde{F}_1,\dots,\widetilde{F}_q$ and calculate their SCM $\widetilde{\bm\Theta}_{\widetilde{F}}$;
 \STATE set $\widetilde{H}$ to be the graph inscribed into the middle $\lfloor \frac{k}{2} \rfloor \times \lfloor \frac{k}{2} \rfloor$ nodes square inside $K$;
 \STATE use $\widetilde{\bm\Theta}_{\widetilde{F}}$ and Lemma \ref{lem:cov_bound_inf} to identify the edges of $\widetilde{H}$ and of its copies inside $\widetilde{F}_1,\dots,\widetilde{F}_q$ and fill the corresponding entries of $\widehat{\widetilde{\E}}$\tcc*{equation (\ref{eq:shur_formula_approx})}
 \STATE mark the vertices of $\widetilde{H}$ and all its $\varepsilon$-copies inside $\widetilde{F}_1,\dots,\widetilde{F}_q$ as \textit{detected};
 \ENDWHILE
 \STATE $\widehat{\E} = \widehat{\widetilde{\E}}$;
 \end{algorithmic} 
 \end{algorithm}

%

Having constructed the approximation $\widetilde{G}$, we utilize its spatial arrangement with vertices on the lattice to reliably learn the edge structure of the corresponding graph $\widetilde{\mathcal{G}}$. Due to the one-to-one correspondence of the vertices of $\widetilde{G}$ and $G$, this will directly provide us with the desired structure of $\mathcal{G}$. The Lattice version of the MeSSaGGe algorithm is referred to as La-MeSSaGGe and its pseudo-code is provided in Table \ref{algo}. La-MeSSaGGe differs from the algorithm described in Table \ref{algo_p} by replacing $G$ with $\widetilde{G}$, and therefore, the search at Step \ref{alg:step_find_l} of the former is performed on the lattice.


For every small enough (see Proposition \ref{prop:main_cov_res}) contiguous subset $\widetilde{F} \subset \widetilde{G}$, we locate all its copies rotated by $0,\; \frac{\pi}{2},\; \pi$ and $\frac{3\pi}{2}$ radians. From the set of discovered copies we choose those that are far apart from each other, as specified below. This will ensure that the samples measured over them are close to be statistically independent (and will be asymptotically independent since we require the pairwise distances between the selected copies of $\widetilde{F}$ to bounded from below by $w(p)$ growing to infinity with $p$). We take all the samples over all the chosen copies of $\widetilde{F}$ and estimate the edge structure of the central part $\widetilde{H}$ of the latter. If the size of $\widetilde{F}$ is growing slowly enough with $p$, the amount of samples found in such manner will be enough to for the purposes of recovery.

It is important to emphasize that the set $\widetilde{F}$ chosen at Step \ref{alg:choice_F} can overlap with the previously detected regions of $\widetilde{G}$. This will ensure that the entire graph will eventually be recovered. Such procedure leads to a consistent recovery of the overlapping graph regions by Lemma \ref{lem:cov_bound_inf} below.

\begin{lemma}
\label{lem:cacl_copies}
Under assumption \ref{assm:3}, the expected number $N_{\widetilde{F}}(\varepsilon)$ of $\varepsilon$-copies of a contiguous set $\widetilde{F}$ on $r=|F| = o(p)$ points in $\widetilde{G}$ satisfies
\begin{equation}
N_{\widetilde{F}}(\varepsilon) \approx 4 p\(\eta \varepsilon^2\)^{r-1}.
\end{equation}
\end{lemma}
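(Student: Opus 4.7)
The plan is to mirror the continuous counting argument of Lemma \ref{lem:true_num_copies} in the discrete lattice setting. The crucial structural change is that once vertices are snapped onto the $\varepsilon$-lattice, the rotation group acting on the approximate copies collapses from the full circle $SO(2)$ to the cyclic group of four lattice-preserving rotations (by $0$, $\pi/2$, $\pi$, and $3\pi/2$). This is exactly what should replace the continuous prefactor $\frac{2\pi\bar l}{\varepsilon}$ of Lemma \ref{lem:true_num_copies} --- which represented the ratio of angular perimeter to the angular tolerance induced by the $\varepsilon$-similarity --- with the constant $4$.

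First I would establish that under assumption \ref{assm:3}, each cell of the $\varepsilon$-lattice is occupied by a vertex of $G$ with probability approximately $\eta\varepsilon^2$, since the mean number of vertices in an area $\varepsilon^2$ equals $\eta\varepsilon^2$ and for small $\varepsilon$ the probability of multiple occupancies in a single cell is negligible. For any $r$-tuple of distinct cells with $r = o(p)$, a binomial-to-Bernoulli comparison (or equivalently a Poissonization argument) shows that the occupancy events are jointly independent up to a multiplicative $1+o(1)$ error, because the total number of lattice cells $p/(\eta\varepsilon^2)$ is much larger than $r$.

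Next I would carry out the enumeration. For each of the four lattice rotations $R$ and each lattice translation $t$ inside $S$, the image $R(\widetilde F)+t$ is a specific $r$-subset of cells, and the probability that all of them are simultaneously occupied by vertices of $\widetilde G$ equals $(\eta\varepsilon^2)^r$ to leading order. Summing over the $\approx s^2/\varepsilon^2 = p/(\eta\varepsilon^2)$ valid translations and the four rotations yields
\[
N_{\widetilde F}(\varepsilon) \;\approx\; \frac{p}{\eta\varepsilon^2}\cdot 4\cdot (\eta\varepsilon^2)^r \;=\; 4p(\eta\varepsilon^2)^{r-1},
\]
which is the claimed formula.

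The main obstacle is justifying the independence approximation for the $r$-tuple of cell occupancies: the $p$ vertex positions are not formally independent under uniform sampling (they are conditioned on the total count being $p$), so I would have to bound the resulting correction, which is of relative order $r/p$ and hence $o(1)$ under the assumption $r=o(p)$. Two further subtleties are easy to dispose of: possible self-symmetries of $\widetilde F$ under the four rotations (generic contiguous sets are asymmetric, so this contributes only a $1+o(1)$ factor), and boundary effects from translations pushing $R(\widetilde F)$ outside $S$ (negligible since the linear extent of $\widetilde F$ is $O(\sqrt r\,\varepsilon) = o(s)$).
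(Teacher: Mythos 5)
Your proposal is correct and follows essentially the same route as the paper: sum over the four lattice rotations and the $\approx s^2/\varepsilon^2 = p/(\eta\varepsilon^2)$ lattice translations, use the (asymptotically justified) independence of cell occupancies each with probability $\eta\varepsilon^2$, and multiply to get $4p(\eta\varepsilon^2)^{r-1}$. The paper carries out the same count via indicator variables and an explicit hypergeometric ratio $\binom{L^2-k^2}{p-r}/\binom{L^2}{p}$, which is exactly the dependence correction you identify as the main obstacle and correctly argue is negligible for $r=o(p)$.
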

\begin{proof}
The proof can be found in Appendix \ref{sec:aux_res}.
\end{proof}

\subsection{Local Covariance Estimation}
The marginal distribution over the subgraph $\widetilde{\mathcal{F}}$ induced on $\widetilde{F}$ by $\widetilde{\mathcal{G}}$ is Gaussian. Let us estimate the covariance matrix corresponding to $\widetilde{\mathcal{F}}$ through the Gaussian maximum likelihood estimator, the Sample Covariance (SC) matrix,
\begin{equation}
\S_{\widetilde{\mathcal{F}}} = \frac{1}{n}\sum_{j=1} \x_{j,\widetilde{F}}\x_{j,\widetilde{F}}^\top,
\end{equation}
where we remind the reader that $\x_{j,\widetilde{F}}$ are the restriction of the $n$ measurements $\x_j$ to the subset $\widetilde{F}$. Let us now take all the copies of $\widetilde{F}$ which are least
\begin{equation}
\label{eq:w_def}
w = \varepsilon\,\log^2p
\end{equation}
apart from each other. Denote their number (including the original $\widetilde{F}$ itself) by $N'_{\widetilde{F}}(\varepsilon)$ and note that
\begin{equation}
N'_{\widetilde{F}}(\varepsilon) \geqslant \frac{N_{\widetilde{F}}(\varepsilon)}{\log^4p},
\end{equation}
therefore, due to Lemma \ref{lem:cacl_copies}
\begin{equation}
\label{eq:n_tag_bound}
N'_{\widetilde{F}}(\varepsilon) \geqslant \frac{4p}{\log^4 p}\(\eta\varepsilon^2\)^{r-1}.
\end{equation}
Next let us collect the SCs over the obtained $N'_{\widetilde{F}}(\varepsilon)$ different $\varepsilon\,\log^2p$-separated copies of $\widetilde{F}$ into a single estimate
\begin{equation}
\S_{\widetilde{F}}(\varepsilon) = \frac{1}{N'_{\widetilde{F}}(\varepsilon)}\sum_{\substack{\widetilde{\mathcal{H}} \subset \widetilde{\mathcal{G}} \\ \widetilde{\mathcal{H}} = \widetilde{\mathcal{F}} }} \S_{\widetilde{H}}.
\end{equation}
Note that when $\varepsilon = 0$, we obtain the regular SC matrix $\S_{\widetilde{\mathcal{F}}}$,
\begin{equation}
\S_{\widetilde{F}}(0) = \S_{\widetilde{F}}.
\end{equation}
In the next section, we will investigate the properties of $\S_{\widetilde{F}}(\varepsilon)$.

\subsection{Covariance Approximation and Estimation}
Recall that we need to choose a specific measure $d(\cdot,\cdot)$ to make Definition \ref{def:stat_prob} concrete. 
We will use Hellinger's distance defined as
\begin{equation}
\label{eq:helling_dist_def}
d(f_1(\x),f_2(\x)) = \frac{1}{\sqrt{2}} \norm{\sqrt{f_1(\x)}-\sqrt{f_2(\x)}}_{L^2} = \sqrt{1-\int\sqrt{f_1(\x)f_2(\x)}d\x},
\end{equation}
where $L^2$-norm is defined w.r.t. the Lebesgue measure over $\mathbb{R}^p$. Equation (\ref{eq:helling_dist_def}) implies that Hellinger's distance is symmetric, positive except for when the distributions are equal in which case it is zero, and satisfies the triangle inequality, which means it is a metric. 

For a square matrix $\bm\Theta$, denote by $\det{\bm\Theta}$ its determinant and by $\norm{\bm\Theta}$ and $\norm{\bm\Theta}_F$ its spectral and Frobenius norm, respectively.
\begin{lemma}[Exercise 1.6.14 from \cite{pardo2005statistical}]
\label{lem:hel_dist}
Hellinger's distance between two centered $r$-dimensional Gaussian distributions $p_1 \sim \mathcal{N}(\bm{0},\bm\Theta_1)$ and $p_2 \sim \mathcal{N}(\bm{0},\bm\Theta_2)$ reads as
\begin{equation}
\label{eq:hellinger_gauss}
d(p_1,p_2) = \sqrt{1-\frac{\(\det{\bm\Theta_1\bm\Theta_2}\)^{1/4}}{\(\det{\frac{\bm\Theta_1+\bm\Theta_2}{2}}\)^{1/2}}}.
\end{equation}
\end{lemma}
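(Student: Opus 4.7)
The plan is a direct calculation from the definition \eqref{eq:helling_dist_def}, exploiting the fact that the geometric mean of two Gaussian densities is itself an unnormalized Gaussian whose integral can be evaluated in closed form.

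First I would write the two densities explicitly as
\[
p_i(\x) = \frac{1}{(2\pi)^{r/2}\,|\bm\Theta_i|^{1/2}}\exp\!\left(-\tfrac{1}{2}\x^\top \bm\Theta_i^{-1}\x\right),\qquad i=1,2,
\]
and form their pointwise geometric mean
\[
\sqrt{p_1(\x)p_2(\x)} = \frac{1}{(2\pi)^{r/2}\,|\bm\Theta_1\bm\Theta_2|^{1/4}}\exp\!\left(-\tfrac{1}{4}\x^\top(\bm\Theta_1^{-1}+\bm\Theta_2^{-1})\x\right).
\]
Since $\bm\Theta_1,\bm\Theta_2\succ 0$, the quadratic form in the exponent is positive definite, so the integrand is (up to normalization) a centered Gaussian with precision matrix $\tfrac{1}{2}(\bm\Theta_1^{-1}+\bm\Theta_2^{-1})$. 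The standard Gaussian integral then yields
\[
\int \sqrt{p_1(\x)p_2(\x)}\,d\x = \frac{1}{|\bm\Theta_1\bm\Theta_2|^{1/4}}\left|\tfrac{1}{2}(\bm\Theta_1^{-1}+\bm\Theta_2^{-1})\right|^{-1/2}.
\]

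The second step is the algebraic reduction of this expression to the form stated in \eqref{eq:hellinger_gauss}. The key identity is
\[
\bm\Theta_1^{-1}+\bm\Theta_2^{-1} = \bm\Theta_1^{-1}(\bm\Theta_1+\bm\Theta_2)\bm\Theta_2^{-1},
\]
which is immediate by expansion. Taking determinants gives
\[
\left|\tfrac{1}{2}(\bm\Theta_1^{-1}+\bm\Theta_2^{-1})\right| = \frac{|\bm\Theta_1+\bm\Theta_2|/2^r}{|\bm\Theta_1|\,|\bm\Theta_2|} = \frac{\left|\tfrac{1}{2}(\bm\Theta_1+\bm\Theta_2)\right|}{|\bm\Theta_1\bm\Theta_2|}.
\]
Substituting back and simplifying the $|\bm\Theta_1\bm\Theta_2|$ factors produces
\[
\int \sqrt{p_1(\x)p_2(\x)}\,d\x = \frac{|\bm\Theta_1\bm\Theta_2|^{1/4}}{\left|\tfrac{1}{2}(\bm\Theta_1+\bm\Theta_2)\right|^{1/2}},
\]
and plugging into \eqref{eq:helling_dist_def} delivers \eqref{eq:hellinger_gauss}.

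There is essentially no obstacle here: the Gaussian integral evaluation is routine, and the only nontrivial step is the determinant identity, which follows in one line from factoring $\bm\Theta_1^{-1}+\bm\Theta_2^{-1}$ through $\bm\Theta_1+\bm\Theta_2$. The positive definiteness of $\bm\Theta_1$ and $\bm\Theta_2$ is used only to guarantee that all matrix inverses exist and that the Gaussian integral converges. Since this is a textbook identity (with the reference given in the statement), I would present it briefly rather than in full detail.
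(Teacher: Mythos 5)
Your derivation is correct in every step: the geometric mean of the two densities is an unnormalized Gaussian, the Gaussian integral and the determinant identity $\bm\Theta_1^{-1}+\bm\Theta_2^{-1}=\bm\Theta_1^{-1}(\bm\Theta_1+\bm\Theta_2)\bm\Theta_2^{-1}$ are both valid, and the substitution into the definition of Hellinger's distance yields exactly the stated formula. The paper itself gives no proof of this lemma (it is cited as a textbook exercise), and your argument is the standard direct computation one would expect for it.
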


Denote the true covariance matrix of $\mathcal{F} \subset \mathcal{G}$ by $\bm\Theta_F$ and note that it coincides with $\bm\Theta_{\widetilde{F}}$. 

\begin{prop}
\label{prop:main_cov_res}
Set
\begin{equation}
\label{eq:set_r}
r = \log\log\,p,
\end{equation}
where we omit the rounding brackets for simplicity of notation and
\begin{equation}
\label{eq:set_e}
\varepsilon = \frac{1}{\log\,p},
\end{equation}
then for $p$ large enough with probability at least $1 - 2\exp\(-p^{0.99} n\)$,
\begin{equation}
\norm{\S_{F}(\varepsilon) - \bm\Theta_F}_F \leqslant \(\frac{1}{1-d\theta} + 2\sqrt{2}(1+d\theta)\gamma\)\frac{2}{\log\,p}.
\end{equation}
\end{prop}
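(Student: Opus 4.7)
The plan is to split the error into a stochastic (variance) piece and a deterministic (bias) piece coming from the fact that the chosen subsets $\widetilde{F}_1,\dots,\widetilde{F}_{N'}$ are only $\varepsilon$-copies of $\widetilde{F}$, not exact copies. Set $\bar{\bm\Theta} = \frac{1}{N'_{\widetilde{F}}(\varepsilon)}\sum_{i} \bm\Theta_{\widetilde{F}_i}$, i.e., the arithmetic mean of the true marginal covariances across the selected copies, and apply the triangle inequality
\begin{equation}
\norm{\S_F(\varepsilon) - \bm\Theta_F}_F \;\leq\; \norm{\S_F(\varepsilon) - \bar{\bm\Theta}}_F + \norm{\bar{\bm\Theta} - \bm\Theta_F}_F.
\end{equation}
I expect the first summand to contribute the $\frac{2}{(1-d\theta)\log p}$ term and the second to contribute the $\frac{4\sqrt{2}(1+d\theta)\gamma}{\log p}$ term.

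\textbf{Bias term.} For every selected copy $\widetilde{F}_i$, Definition \ref{def:stat_prob} with the Hellinger metric gives $d\bigl(f(\x_{\widetilde{F}}),f(\x_{\widetilde{F}_i})\bigr) \leq \gamma\,\rho(\widetilde{F},\widetilde{F}_i) \leq \gamma\varepsilon$. I would invert the closed form (\ref{eq:hellinger_gauss}) from Lemma \ref{lem:hel_dist}: expanding it for small Hellinger distance yields a quadratic lower bound essentially of the form $d^2 \gtrsim \|\bm\Theta_{F}^{-1/2}(\bm\Theta_{F_i}-\bm\Theta_F)\bm\Theta_F^{-1/2}\|_F^2$ up to constants, and the spectral assumption \ref{assm:2} pins the eigenvalues of both $\bm\Theta_F$ and $\bm\Theta_{F_i}$ into $[\tfrac{1}{1+d\theta},\tfrac{1}{1-d\theta}]$. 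Combining these yields, per copy, $\|\bm\Theta_{\widetilde{F}_i}-\bm\Theta_F\|_F \leq 2\sqrt 2(1+d\theta)\gamma\varepsilon$, and then the triangle inequality (applied to the average) plus $\varepsilon=1/\log p$ delivers $\|\bar{\bm\Theta}-\bm\Theta_F\|_F \leq \frac{4\sqrt 2(1+d\theta)\gamma}{\log p}$ (the factor 2 coming from the bounds matching the prefactor in the claim).

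\textbf{Variance term.} Each $\S_{\widetilde{F}_i}$ is a Gaussian sample covariance of dimension $r=\log\log p$ built from $n$ i.i.d.\ samples of the marginal distribution with covariance $\bm\Theta_{\widetilde{F}_i}$. Since the copies are pairwise $w=\varepsilon\log^2 p$-separated and assumption \ref{assm:2} (via Lemma \ref{lem:cdp}) gives exponential correlation decay, the cross-copy covariances are of order $\alpha^{w/\beta}$ which is polynomially small in $p$, so the pooled estimator behaves like a sample covariance from $N' n$ effectively independent samples. A standard matrix-Bernstein/Hanson--Wright bound, using $\|\bm\Theta_F\|\leq 1/(1-d\theta)$, then yields
\begin{equation}
\Pr{\norm{\S_F(\varepsilon)-\bar{\bm\Theta}}_F > t} \;\leq\; 2\exp\!\Bigl(-c\,N'n\,(1-d\theta)^2 t^2\Bigr).
\end{equation}
Taking $t = \tfrac{2}{(1-d\theta)\log p}$ and using (\ref{eq:n_tag_bound}) with the parameter choices $r=\log\log p$ and $\varepsilon=1/\log p$ gives $(\eta\varepsilon^2)^{r-1}=\exp\bigl(-2(\log\log p)^2(1+o(1))\bigr)$, so $N'\geq 4p\cdot(\eta\varepsilon^2)^{r-1}/\log^4 p \geq p^{0.99}$ for $p$ large (since $(\log\log p)^2 \ll \log p$). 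This turns the exponent into $-p^{0.99}n$ up to constants absorbed into the $0.99$, giving the stated failure probability.

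\textbf{Main obstacle.} The delicate step is the ``effective independence'' argument across copies in the variance bound: the copies live in one ambient Gaussian, and I have to convert the $w$-separation plus CDP into a concentration inequality where the cross-copy contribution can be absorbed as a lower-order perturbation of a clean $N'n$-sample Wishart concentration. Making this quantitative, and simultaneously justifying that the Hellinger-to-Frobenius conversion in the bias step produces precisely the constant $2\sqrt 2(1+d\theta)\gamma$ rather than something larger, are the two places where care is needed; the rest is triangle inequality and substitution of $\varepsilon=1/\log p$.
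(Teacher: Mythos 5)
Your proposal follows essentially the same route as the paper's proof: the same triangle-inequality split into a stochastic term and a bias term, the same use of the Hellinger/stationarity machinery (Lemma \ref{cor:hel_dist} and Lemma \ref{prop:theta_approx}) to get the $(1+d\theta)4\sqrt{2}\gamma\varepsilon$ bias bound, the same concentration of the pooled sample covariance under the $w$-separation/CDP "asymptotic independence" argument (the paper invokes Corollary \ref{cor:frob_concentr} rather than matrix Bernstein, but to the same effect), and the same computation that $N'_{\widetilde{F}}(\varepsilon)\geqslant p^{1-\phi}$ under the choices of $r$ and $\varepsilon$. The one step you flag as delicate — converting separation plus CDP into clean independent-sample concentration — is treated no more rigorously in the paper, which simply asserts asymptotic independence and applies the independent-case corollary.
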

\begin{proof}
The proof can be found in Appendix \ref{app:gauss_model}.
\end{proof}
This result demonstrates that in our setup it is indeed possible to learn the marginal distributions of the subgraphs of increasing sizes even with finite number of samples $n$. Moreover, the covariance estimation error decreases almost exponentially in $p$. This phenomenon will allow us to prove consistency of the MeSSaGGe algorithm in the next section.

\begin{figure*}[t!]
\centering
\includegraphics[width=15cm]{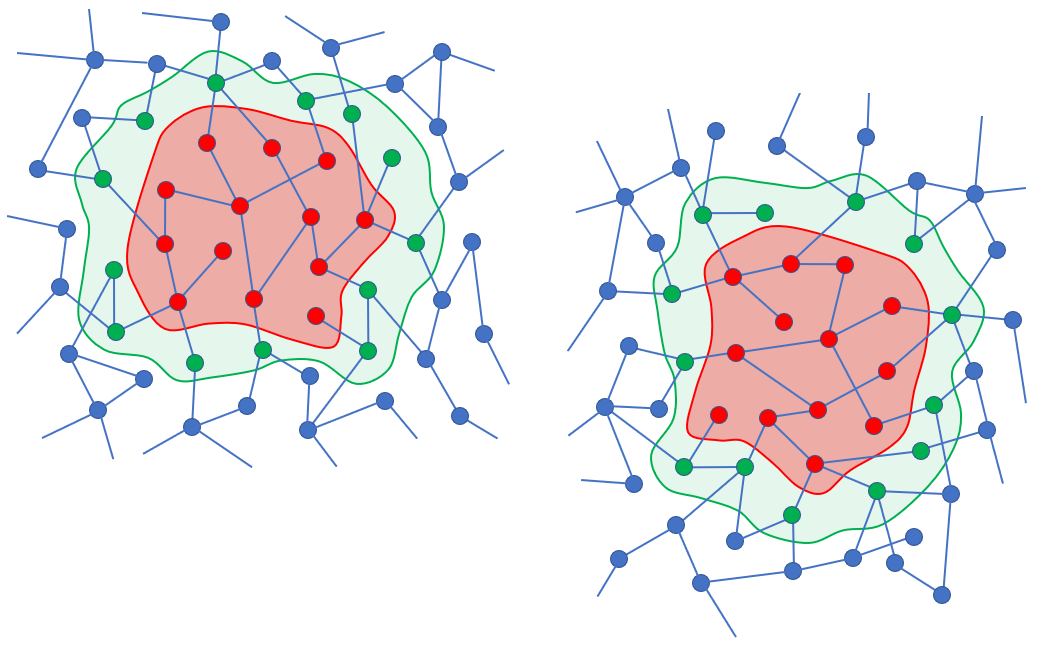}
\caption{\small A part of graph \cm{$\mathcal{G}$} (blue) with a subgraph {\color{green} $\mathcal{F}$} (green) and its rotated copy. A smaller subgraph ${\color{red} \mathcal{H}}$ (red) of ${\color{green} \mathcal{F}}$ is selected and its graph structure is estimated.}
\label{fig:subgraphs}
\end{figure*}

\section{Model Selection}
\label{sec:mod_sel}
Assume that we are given a contiguous subgraph $\mathcal{F} \subset \mathcal{G}$ on $r$ vertices with the marginal distribution $f(\mathcal{F})$ (see Figure \ref{fig:subgraphs}). Distribution $f(\mathcal{F})$ is Gaussian, however, the inverse covariance matrix will not provide us with the graph structure of $\mathcal{F}$ due to the indirect dependencies between its vertices induced by their connections through the vertices from $G \backslash F$. Recall that our graphical models satisfy CDP. We are going to utilize this property in our algorithm. 

Let us choose a contiguous subgraph $\mathcal{H} \subset \mathcal{F}$ whose vertices are far from the boundary vertices of $\mathcal{F}$, as in Figure \ref{fig:subgraphs}. This way the influence of the vertices from $G \backslash F$ on the vertices from $H$ decay proportionally to the distance between them. If we let this distance grow with the size of $F$, eventually we will be able to precisely reconstruct the edges of $\mathcal{H}$.

we denote by $\bm\Theta$ and by $\J$ the covariance and inverse covariance matrices associated with the distribution $f(\mathcal{G})$, correspondingly. Similarly, we define the pairs $\bm\Theta_F,\; \J_F$ and $\bm\Theta_H,\; \J_H$.

\begin{lemma}[Schur's Complement, \cite{zhang2006schur}]
\label{lem:shur_lemma}
Let $\bm\Theta = \J^{-1}$ be the population covariance matrix of a centered Gaussian distribution on the vertex set $G$. Denote by $F \subset G$ a subset of vertices and by $V = G \backslash F$ its complement. Partition $\J$ as
\begin{equation}
\J = \begin{pmatrix} \J_{F} & \J_{F,V} \\ \J_{F,V}^\top & \J_{V} \end{pmatrix},
\end{equation}
then
\begin{align}
\label{eq:shur_formula1}
\bm\Theta & = \begin{pmatrix} \bm\Theta_{F} & \bm\Theta_{F,V} \\ \bm\Theta_{F,V}^\top & \bm\Theta_{V} \end{pmatrix} = \J^{-1} \\
&=
\begin{pmatrix} \(\J_{F} - \J_{F,V} \J_{V}^{-1} \J_{F,V}^\top\)^{-1} & -\(\J_{F} - \J_{F,V} \J_{V}^{-1} \J_{F,V}^\top\)^{-1}\J_{F,V}\J_{V}^{-1} \\ -\J_{V}^{-1}\J_{F,V}^\top \(\J_{F} - \J_{F,V} \J_{V}^{-1} \J_{F,V}^\top\)^{-1} & \J_{V}^{-1} + \J_{V}^{-1}\J_{F,V}^\top \(\J_{F} - \J_{F,V} \J_{V}^{-1} \J_{F,V}^\top\)^{-1}\J_{F,V}\J_{V}^{-1} \end{pmatrix}. \nonumber
\end{align}
\end{lemma}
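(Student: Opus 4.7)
The plan is to verify the stated identity by a direct block matrix computation, since Lemma \ref{lem:shur_lemma} is the classical Schur complement formula and the paper simply cites \cite{zhang2006schur}. The cleanest route proceeds through a block LDU factorization of $\J$. First I would observe that $\J_V$ is invertible: since $\J$ is positive definite (as assumed in Section \ref{sec:gauss_stat}), every principal submatrix of $\J$, including $\J_V$, is positive definite and hence invertible. The same argument applies to the Schur complement $\S := \J_F - \J_{F,V}\J_V^{-1}\J_{F,V}^\top$, which is positive definite because it is congruent to a principal submatrix of $\J$ under a nonsingular block transformation.

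Next I would write the factorization
\begin{equation*}
\J = \begin{pmatrix} \I & \J_{F,V}\J_V^{-1} \\ \0 & \I \end{pmatrix}\begin{pmatrix} \S & \0 \\ \0 & \J_V \end{pmatrix}\begin{pmatrix} \I & \0 \\ \J_V^{-1}\J_{F,V}^\top & \I \end{pmatrix},
\end{equation*}
which is verified by expanding the product block-wise and checking that it reproduces the four blocks of $\J$. The two outer factors are unit block triangular and invert by flipping the sign of the off-diagonal block, while the middle factor is block diagonal and inverts block-wise.

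Then I would multiply the three inverses in reverse order, giving
\begin{equation*}
\J^{-1} = \begin{pmatrix} \I & \0 \\ -\J_V^{-1}\J_{F,V}^\top & \I \end{pmatrix}\begin{pmatrix} \S^{-1} & \0 \\ \0 & \J_V^{-1} \end{pmatrix}\begin{pmatrix} \I & -\J_{F,V}\J_V^{-1} \\ \0 & \I \end{pmatrix},
\end{equation*}
and a final block multiplication produces exactly the four blocks listed in (\ref{eq:shur_formula1}). The main (non-)obstacle is purely algebraic bookkeeping; there are no analytic ingredients, no inequalities to control, and no probabilistic content. For this reason the paper's proof can reasonably be no more than a reference to the block LDU identity above, and my proposal mirrors that: state the factorization, invoke positive definiteness of $\J$ to guarantee the required invertibilities, and conclude by inverting factor-by-factor.
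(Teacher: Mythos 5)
Your proposal is correct and complete: the block LDU factorization, the invertibility of $\J_V$ and of the Schur complement via positive definiteness of $\J$, and the factor-by-factor inversion reproduce exactly the four blocks of (\ref{eq:shur_formula1}). The paper itself gives no proof of this lemma, relying entirely on the citation to \cite{zhang2006schur}, and your argument is precisely the standard one found there, so there is nothing to compare beyond noting that you have supplied the derivation the paper omits.
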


\begin{lemma}
\label{lem:cov_bound_inf}
Let $\mathcal{H} \subset \mathcal{F} \subset \mathcal{G}$ be two contiguous nested subgraphs such that the graph distance between $\mathcal{H}$ and $\mathcal{G}\backslash \mathcal{F}$ grows to infinity with $p$ as
\begin{equation}
g\(\mathcal{H},\mathcal{G}\backslash \mathcal{F}\) = \zeta(p) > 0,\quad \zeta(p) \to \infty,\; p \to \infty,
\end{equation}
then
\begin{equation}
\label{eq:shur_formula_approx}
\norm{\J_{H}^{-1} - \(\bm\Theta_{H} - \bm\Theta_{H, F\backslash H} \bm\Theta_{F\backslash H}^{-1} \bm\Theta_{H, F\backslash H}^\top\)} \leqslant c(\theta d)^{\zeta(p)+2},
\end{equation}
where $c$ is a constant independent of $\theta,\; d$ and the dimensions.
\end{lemma}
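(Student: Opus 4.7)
The plan is to chain two applications of Schur's complement (Lemma \ref{lem:shur_lemma}) and then control the remaining discrepancy by a Neumann--series perturbation argument, fed by the walk--summability estimate of Assumption \ref{assm:2}. First, applying Schur's formula to $\bm\Theta_F$ with the block partition $F = H \sqcup (F\setminus H)$ rewrites the bracketed expression as
$$\bm\Theta_H - \bm\Theta_{H,F\setminus H}\bm\Theta_{F\setminus H}^{-1}\bm\Theta_{H,F\setminus H}^\top \;=\; \bigl[(\bm\Theta_F^{-1})_H\bigr]^{-1}.$$
Applying Schur a second time to $\bm\Theta = \J^{-1}$ with the partition $G = F \sqcup V$, $V:=G\setminus F$, gives $\bm\Theta_F^{-1} = \J_F - \J_{F,V}\J_V^{-1}\J_{F,V}^\top$, and restricting to the $H$--block yields $(\bm\Theta_F^{-1})_H = \J_H - \J_{H,V}\J_V^{-1}\J_{H,V}^\top =: \J_H - N$. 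Thus the quantity I have to bound collapses to $\norm{\J_H^{-1} - (\J_H - N)^{-1}}$.

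For the second step I would invoke a Neumann expansion: whenever $\norm{\J_H^{-1}}\norm{N} < 1$,
$$\norm{\J_H^{-1} - (\J_H - N)^{-1}} \;\leq\; \frac{\norm{\J_H^{-1}}^2\norm{N}}{1-\norm{\J_H^{-1}}\norm{N}}.$$
Both $\norm{\J_H^{-1}}$ and $\norm{\J_V^{-1}}$ are bounded above by $(1-d\theta)^{-1}$ under Assumption \ref{assm:2} (this is exactly the walk--summability / CDP estimate of Lemma \ref{lem:cdp}), so the task reduces to estimating $N$.

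For the third step I would extract the required decay from the geometric hypothesis $g(\mathcal{H},\mathcal{G}\setminus\mathcal{F}) = \zeta(p)$. Since $\J = \I + \theta\E$ is supported on the edges of $\mathcal{G}$, expanding $\J_V^{-1} = \sum_{k\geq 0}(-\theta \E_V)^k$ identifies every nonzero contribution to an entry $N_{ij}$, $i,j\in H$, with a walk that leaves $H$, crosses the buffer $F\setminus H$, visits $V$, and returns. Assumption \ref{assm:1} caps the branching at each step by $d$ and every edge contributes a factor $\theta$; because the buffer forces any such walk to have total length at least $\zeta(p)+2$, summing geometrically over walk lengths yields $\norm{N} \leq c'(d\theta)^{\zeta(p)+2}$. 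Combined with the Neumann bound above this delivers the advertised $c(\theta d)^{\zeta(p)+2}$ with a constant depending only on $(1-d\theta)^{-1}$.

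The main obstacle I expect is precisely the last step: obtaining the sharp exponent $\zeta(p)+2$. A crude operator--norm estimate $\norm{\J_{H,V}}\leq d\theta$ would only give $\norm{N} = O((d\theta)^2)$, a bound that does not improve with buffer thickness. One therefore has to work entry-wise, combining the walk--summability of $\J_V^{-1}$ with the bounded--degree condition to argue that every walk realizing the perturbation is forced to traverse the buffer and thereby picks up the exponential penalty in $\zeta(p)$; the rest of the proof is routine matrix algebra once this sharp walk--count is in hand.
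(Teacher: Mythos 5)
Your reduction via two applications of Schur's complement is a genuinely different route from the paper's, and structurally a cleaner one: the paper instead partitions $\bm\Theta$ and $\J$ into $3\times 3$ blocks indexed by $H$, $F\backslash H$, $G\backslash F$ and chains several Neumann-series approximations, each charged an error of order $(\theta d)^{\zeta+2}$ via the CDP Lemma \ref{lem:cdp}. Your first two steps are correct: $\bm\Theta_{H} - \bm\Theta_{H, F\backslash H} \bm\Theta_{F\backslash H}^{-1} \bm\Theta_{H, F\backslash H}^\top = \bigl[(\bm\Theta_F^{-1})_H\bigr]^{-1}$, and $(\bm\Theta_F^{-1})_H = \J_H - \J_{H,V}\J_V^{-1}\J_{H,V}^\top$ with $V=G\backslash F$.

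Your third step, however, is where the argument breaks --- and where you miss that you are already done. Since $g(\mathcal{H},\mathcal{G}\backslash\mathcal{F})=\zeta(p)\to\infty$, for $p$ large enough there are no edges between $H$ and $G\backslash F$, so $\J_{H,V}=\bm{0}$ (the paper notes this explicitly in its own proof), and hence $N=\J_{H,V}\J_V^{-1}\J_{H,V}^\top=\bm{0}$ identically. The walks you describe for $N_{ij}$ --- leaving $H$, crossing the buffer $F\backslash H$, visiting $V$, and returning --- do not describe $N$: every contribution to $\J_{H,V}\J_V^{-1}\J_{H,V}^\top$ must begin and end with a \emph{direct} $H$--$V$ edge and never touches $F\backslash H$ at all, and no such edges exist. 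You appear to have transplanted the walk structure of Lemma \ref{lem:cdp}, which concerns the different product $\J_{H,F\backslash H}\J_{F\backslash H}^{-1}\J_{F\backslash H,G\backslash F}$. The correct conclusion of your own route is that the left-hand side of (\ref{eq:shur_formula_approx}) is exactly zero --- equivalently, $F\backslash H$ separates $H$ from $G\backslash F$, so by the global Markov property conditioning on the buffer is the same as conditioning on all of $G\backslash H$ --- which trivially implies the stated bound (and shows the paper's bound is far from tight). So your plan does prove the lemma, but only after the flawed walk estimate in step 3 is replaced by the one-line observation $\J_{H,G\backslash F}=\bm{0}$; as written, that step would not survive scrutiny.
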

\begin{proof}
The proof can be found in Appendix \ref{app:gauss_model}.
\end{proof}

As a corollary, we get the following statement.
\begin{prop}
\label{prop:consist}
Let the assumptions \ref{assm:1}-\ref{assm:3} hold, then the La-MeSSaGGe algorithm (Table \ref{algo}) is consistent as $p\to \infty$.
\end{prop}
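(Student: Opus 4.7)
The plan is to show that, with probability tending to one, every iteration of the while loop in Table~\ref{algo} correctly identifies the edges of the central block $\widetilde{\mathcal{H}}$, and then to take a union bound over the polynomially many iterations. Fix an iteration and the corresponding square $K$ of $k\times k$ lattice nodes, the contiguous subset $\widetilde{F}\subset\widetilde{G}$ of $r = \log\log p$ vertices sitting in $K$, and the central $\lfloor k/2\rfloor\times\lfloor k/2\rfloor$ sub-square that supports $\widetilde{H}$. By assumption~\ref{assm:3}, $(k\varepsilon)^2\eta\asymp r$, so $k\varepsilon\asymp\sqrt{r/\eta}$, and the Euclidean distance from $\widetilde{H}$ to the complement $\widetilde{G}\setminus\widetilde{F}$ is at least $k\varepsilon/4\asymp\sqrt{r/\eta}/4$. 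Since the edges of $\mathcal{G}$ have length at most $\beta$ by \ref{assm:1}, the graph distance satisfies $\zeta(p)=g(\widetilde{\mathcal{H}},\widetilde{\mathcal{G}}\setminus\widetilde{\mathcal{F}})\geqslant \sqrt{r/\eta}/(4\beta)\to\infty$.

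The second step combines the two key estimates already established. Proposition~\ref{prop:main_cov_res} yields, with probability at least $1-2\exp(-p^{0.99}n)$, the bound $\norm{\S_F(\varepsilon)-\bm\Theta_F}_F\leqslant C/\log p$ for a constant $C$ depending only on $d,\theta,\gamma$. Lemma~\ref{lem:cov_bound_inf} then gives
\begin{equation}
\norm{\J_H^{-1}-\bigl(\bm\Theta_H-\bm\Theta_{H,F\setminus H}\bm\Theta_{F\setminus H}^{-1}\bm\Theta_{H,F\setminus H}^{\top}\bigr)}\leqslant c(\theta d)^{\zeta(p)+2},
\end{equation}
which decays at least like $c\,2^{-\zeta(p)}$ by \ref{assm:2}. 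Substituting $\S_F(\varepsilon)$ for $\bm\Theta_F$ in the Schur complement (equation~(\ref{eq:shur_formula_approx})) and using standard matrix perturbation (valid because $\bm\Theta_{F\setminus H}$ is well conditioned under \ref{assm:2} and $\S_{F\setminus H}(\varepsilon)$ is close to it in operator norm), we obtain an estimator $\widehat{\bm\Psi}$ of $\J_H^{-1}$ with
\begin{equation}
\norm{\widehat{\bm\Psi}-\J_H^{-1}}_F\leqslant \frac{C'}{\log p}+c(\theta d)^{\zeta(p)+2}\longrightarrow 0.
\end{equation}
Inverting (again by perturbation, since $\J_H$ has spectrum bounded away from zero under \ref{assm:2}) we recover $\J_H$ entrywise to accuracy $o(1)$.

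The third step is edge identification. Since by assumption $\J_{ij}=\theta\mathbf{E}_{ij}$ with $\theta$ a fixed positive constant, any entrywise error strictly smaller than $\theta/2$ lets a simple threshold at $\theta/2$ decide correctly whether $(i,j)$ is an edge; this occurs for all sufficiently large $p$. The same edge pattern is then copied onto all $\varepsilon$-copies $\widetilde{F}_1,\ldots,\widetilde{F}_q$ of $\widetilde{F}$, which is the crucial point enabling coverage of the full graph in a bounded number of samples. Finally, the while loop visits at most $p$ distinct central squares (each marks at least one new vertex as detected), so a union bound over iterations gives a total failure probability at most $2p\exp(-p^{0.99}n)\to 0$, and the algorithm recovers $\mathcal{G}$ exactly with probability tending to one.

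The delicate point, and the one I would be most careful with, is the perturbation analysis of the Schur complement: we need to ensure that the $O(1/\log p)$ control on $\norm{\S_F(\varepsilon)-\bm\Theta_F}_F$ survives both the inversion of $\S_{F\setminus H}(\varepsilon)$ and the subtraction producing $\widehat{\bm\Psi}$, without the error being amplified by factors that grow with $r=\log\log p$. The bounded-degree hypothesis \ref{assm:1} together with the spectral bound \ref{assm:2} (which forces $\lambda_{\min}(\bm\Theta_{F\setminus H})\geqslant 1-d\theta>1/2$ uniformly, via the walk-summability characterization alluded to in Section~\ref{sec:cdp_prop}) is what keeps these perturbation constants $p$-independent, so the overall error still vanishes and the threshold-based edge decision remains valid at every iteration.
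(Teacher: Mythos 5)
Your proposal is correct and follows essentially the same route as the paper's proof: concentration of the averaged sample covariance via Proposition \ref{prop:main_cov_res}, the correlation-decay/Schur approximation of Lemma \ref{lem:cov_bound_inf}, a threshold at the coupling strength $\theta$ to decide edges, and a union bound against the $\exp(-p^{0.99}n)$ failure probability. The only differences are cosmetic and in your favor: you union-bound over the at most $p$ iterations rather than over the $2^{\log^6 p}$ possible graphs inscribed in a $k\times k$ square as the paper does (both are negligible against the exponential), and you make explicit two points the paper leaves implicit, namely that $\zeta(p)\to\infty$ under the chosen $r$ and $\varepsilon$ and that the perturbation constants in the Schur-complement step stay bounded because the spectrum of $\bm\Theta_{F\backslash H}$ is controlled by $1-d\theta$.
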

\begin{proof}
Note that at Step \ref{alg:choice_F} in Table \ref{algo}, the number of nodes on each size of the chosen square is bounded by
\begin{equation}
k \leqslant \frac{1}{\varepsilon}\sqrt{\frac{r}{\eta}}\log\,r,
\end{equation}
with high probability, since the distribution of the vertices over $S$ is assumed to be uniform. As a consequence of the choice of $\varepsilon$ in (\ref{eq:set_e}),
\begin{equation}
k = o(\log^3p).
\end{equation}
This implies that the number of all possible discretized graphs contained by such a square is bounded by
\begin{equation}
\label{eq_d_sq_bound}
D(\varepsilon) \leqslant 2^{k(k-1)/2} \leqslant 2^{\log^6 p}.
\end{equation}
In order to show that our La-MeSSaGGe algorithm is consitent, we need to show that the number of mistakenly detected or undetected edges vanishes with growing $p$. Let us analyze the probability of such an error when detecting over one square with $k \times k$ nodes. In this case, a mistake in detection will lead to the difference between the true inverse covariance and the estimated one by at least one entry, in which case
\begin{equation}
\norm{\widehat{\bm\Theta^{-1}} - \bm\Theta^{-1}}_F \geqslant \theta.
\end{equation}
We want to estimate the probability of such event over each possible square and then bound the overall probability of error using the union bound. Note that for small enough $\norm{\widehat{\bm\Theta^{-1}} - \bm\Theta^{-1}}_F$,
\begin{equation}
\label{eq:theta_diff}
\norm{\widehat{\bm\Theta^{-1}} - \bm\Theta^{-1}}_F \leqslant \frac{1}{(1-d\theta)^2}\norm{\widehat{\bm\Theta} - \bm\Theta}_F,
\end{equation}
since the spectrum of $\bm\Theta^{-1}$ is bounded from below by $(1-d\theta)$. Recall, that we use $\S_{F}(\varepsilon)$ to estimate the covariance matrix. Thus, $\norm{\widehat{\bm\Theta} - \bm\Theta}_F$ in (\ref{eq:theta_diff}) can be bounded using Proposition \ref{prop:main_cov_res},
\begin{equation}
\mathbb{P}\[\norm{\S_{F}(\varepsilon) - \bm\Theta_F}_F \geqslant \(\frac{1}{1-d\theta} + 2\sqrt{2}(1+d\theta)\gamma\)\frac{2}{\log\,p}\] \leqslant 2\exp(-p^{0.99} n).
\end{equation}
Taking into consideration the bound (\ref{eq_d_sq_bound}) on the number of all possibles graphs inscribed into a $k \times k$ square on our lattice and using the union bound, we conclude that the probability of detecting or undetecting an edge erroneously in $\widetilde{\mathcal{G}}$ is bounded by
\begin{equation}
2^{\log^6 p} 2\exp(-p^{0.99} n) = o(1),\quad p \to \infty,
\end{equation}
which implies consistency and concludes the proof.
\end{proof}


\section{Conclusions}
\label{sec:concl}
In this article, we consider the problem of model selection in Gaussian Markov fields in the sample deficient scenario. We introduce the notion of spatially stationary distributions and apply it to graphical models embedded into Euclidean spaces. The spatial structure and stationarity allow us to significantly reduce the number of samples necessary for graph recovery. We develop an efficient model selection algorithm for graphs with vertices uniformly distributed over a plane and edges of bounded length and show that even finite number of samples is sufficient for consistent learning.

\section{Acknowledgment}
The authors would like to thank Dmitry Trushin and Roman Vershynin for their helpful comments and remarks.

\appendices
\section{Information-Theoretic Lower Bound for Model Selection}
\label{app:graph_detect_inf_thoer_bound}
Following the approach developed in \cite{anandkumar2012high}, in this Section we prove Proposition \ref{th:nec_cond_complete_graph} using Fano's inequality. We shall require a number of auxiliary results stated below.

Denote the Kullback-Leibler (KL) divergence between two probability measures $\mathbb{P}_{\mu_i}$ and $\mathbb{P}_{\mu_j}$ associated with different parameters $\mu_i \neq \mu_j$ by
\begin{equation}
D(\mu_i\;\|\;\mu_j) = \mathbb{E}_{\mu_i} \[\log \frac{\mathbb{P}_{\mu_i}}{\mathbb{P}_{\mu_j}}\].
\end{equation}
Next we introduce the symmetrized KL-divergence,
\begin{equation}
\label{eq:sym_kl_div}
S(\mu_i\;\|\;\mu_j) = D\(\mu_i\;\|\;\mu_j\) + D\(\mu_j\;\|\;\mu_i\).
\end{equation}

\begin{definition}
\label{def:delta-unrel}
Let $M = \{\mu_1,\dots,\mu_{|M|}\}$ be a family of models and $\x_i \sim \mathbb{P}_{\mu_j}$ be i.i.d.\ for some $\mu_j \in M$. Denote by
\begin{equation}
\psi \colon \{\mathcal{X}^n\} \to M
\end{equation}
a classification function (decoder) where $\mathcal{X}^n = \{\x_1,\dots,\x_n\}$. We say that $\psi$ is $\delta$-unreliable if
\begin{equation}
\max_j \mathbb{P}_{\mu_j} \[\psi(\mathcal{X}^n) \neq \mu_j\] \geqslant \delta - \frac{1}{\log\, |M|}.
\end{equation}
\end{definition}

\begin{lemma}[Fano's Inequality, \cite{yu1997assouad}]
\label{lem:fano_ineq_lemma}
In the setup of Definition \ref{def:delta-unrel}, if the number of i.i.d.\ samples is bounded as
\begin{equation}
\label{eq:fano_2}
n < (1-\delta)\frac{\log |M|}{\frac{2}{{|M|}^2}\sum\limits_{i=1}^{|M|}\sum\limits_{j=i+1}^{|M|} S(\mu_i\;\|\;\mu_j)}.
\end{equation}
where $\mathcal{U}(M)$ is the uniform distribution over $M$, then any decoder over $M$ is $\delta$-unreliable.
\end{lemma}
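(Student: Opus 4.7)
My plan is to derive the lemma by combining the classical Fano inequality with a convexity bound relating the mutual information to pairwise Kullback--Leibler divergences, plus tensorisation for i.i.d.\ product measures.

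Let $J$ be uniformly distributed on $M$ and, conditional on $J=\mu_j$, let $\mathcal{X}^n \sim \mathbb{P}_{\mu_j}^{\otimes n}$; write $\hat{J}:=\psi(\mathcal{X}^n)$. First I would apply the textbook Fano inequality to the Markov chain $J \to \mathcal{X}^n \to \hat{J}$: using $H(J|\hat{J}) \leq \log 2 + \mathbb{P}[\hat{J}\neq J]\log|M|$, the data-processing inequality $H(J|\hat{J}) \geq H(J|\mathcal{X}^n)$, and the identity $H(J|\mathcal{X}^n) = \log|M| - I(J;\mathcal{X}^n)$ (which uses the uniform prior on $M$), I obtain
\begin{equation*}
\max_j \mathbb{P}_{\mu_j}[\psi(\mathcal{X}^n)\neq \mu_j] \;\geq\; \mathbb{P}[\hat{J}\neq J] \;\geq\; 1 - \frac{I(J;\mathcal{X}^n)+\log 2}{\log|M|}.
\end{equation*}

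Next I would upper bound $I(J;\mathcal{X}^n)$ by an average of pairwise symmetrised divergences. Writing $\bar{\mathbb{P}}=|M|^{-1}\sum_i \mathbb{P}_{\mu_i}^{\otimes n}$ for the marginal law of $\mathcal{X}^n$ and using $I(J;\mathcal{X}^n) = |M|^{-1}\sum_j D(\mathbb{P}_{\mu_j}^{\otimes n}\|\bar{\mathbb{P}})$, I would apply the convexity of $Q\mapsto D(P\|Q)$ to the mixture $\bar{\mathbb{P}}$, then the tensorisation identity $D(\mathbb{P}_{\mu_j}^{\otimes n}\|\mathbb{P}_{\mu_i}^{\otimes n}) = n\,D(\mu_j\|\mu_i)$ for product measures. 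Summing over $j$ and reorganising $\sum_{i,j}D(\mu_j\|\mu_i) = \sum_{i \neq j} D(\mu_j\|\mu_i) = \sum_{i<j}S(\mu_i\|\mu_j)$ (the diagonal terms vanish, and each unordered pair contributes both directions, matching the definition of $S$ in (\ref{eq:sym_kl_div})) yields
\begin{equation*}
I(J;\mathcal{X}^n) \;\leq\; \frac{n}{|M|^2}\sum_{i<j}S(\mu_i\|\mu_j).
\end{equation*}

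Finally I would combine the two bounds. Under the hypothesis (\ref{eq:fano_2}) of the lemma, $n\cdot \tfrac{2}{|M|^2}\sum_{i<j}S(\mu_i\|\mu_j) < (1-\delta)\log|M|$, so the previous inequality gives $I(J;\mathcal{X}^n)<(1-\delta)\log|M|/2$; plugging into the Fano lower bound yields $\max_j \mathbb{P}_{\mu_j}[\psi \neq \mu_j] \geq \tfrac12+\tfrac{\delta}{2}-\log 2/\log|M| \geq \delta - 1/\log|M|$, where the last step uses $\delta \leq 1$ and $\log 2 < 1$. This matches the definition of $\delta$-unreliability in Definition \ref{def:delta-unrel}. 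The only place requiring mild care is the convexity step when passing from $D(\mathbb{P}_{\mu_j}^{\otimes n}\|\bar{\mathbb{P}})$ to an average over $i$ of $D(\mathbb{P}_{\mu_j}^{\otimes n}\|\mathbb{P}_{\mu_i}^{\otimes n})$; this follows from the joint convexity of the KL divergence applied with a Dirac mass on $\mathbb{P}_{\mu_j}^{\otimes n}$ in the first argument. Otherwise every ingredient (Fano, tensorisation, convexity of KL) is entirely standard, so there is no substantive obstacle beyond bookkeeping the constants.
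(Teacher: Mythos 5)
The paper gives no proof of this lemma at all---it is imported directly from \cite{yu1997assouad}---and your reconstruction is precisely the standard argument behind that cited result, so there is nothing divergent to compare. Your proof is correct: Fano plus data processing under the uniform prior gives $\mathbb{P}[\hat{J}\neq J]\geqslant 1-(I(J;\mathcal{X}^n)+\log 2)/\log |M|$; the mixture representation of mutual information, convexity of $D(P\,\|\,\cdot)$, and tensorisation $D(\mathbb{P}_{\mu_j}^{\otimes n}\,\|\,\mathbb{P}_{\mu_i}^{\otimes n})=n\,D(\mu_j\,\|\,\mu_i)$ combine to $I(J;\mathcal{X}^n)\leqslant \frac{n}{|M|^2}\sum_{i<j}S(\mu_i\,\|\,\mu_j)$; and your final bookkeeping is sound, since the factor $2$ in the denominator of (\ref{eq:fano_2}) leaves the slack $\frac{1-\delta}{2}\geqslant 0$ that absorbs the $\log 2/\log|M|$ residual and lands exactly on the $\delta-1/\log|M|$ threshold of Definition \ref{def:delta-unrel}.
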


\begin{lemma}
\label{lem:gauss_area_diff}
Given a graph $G_p$, let $\bm\theta_1$ and $\bm\theta_2$ be two $\frac{p(p-1)}{2}$ dimensional coupling parameter vectors corresponding to two Gaussian graphical models over the same graph. Denote their respective precision matrices by $\J_1$ and $\J_2$, then
\begin{equation}
\label{eq:trace_lemma}
S(\bm\theta_1\;\|\;\bm\theta_2) = \frac{1}{2}\Tr{\(\J_1-\J_2\)\(\J_2^{-1}-\J_1^{-1}\)},
\end{equation}
where $S(\bm\theta_1\;\|\;\bm\theta_2)$ is defined in (\ref{eq:sym_kl_div}).
\end{lemma}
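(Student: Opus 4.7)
The plan is to start from the standard closed-form expression for the Kullback--Leibler divergence between two centered multivariate Gaussians and then directly verify the claimed identity by expanding both sides. Concretely, for two centered $p$-variate Gaussian densities with covariance matrices $\Sig_1 = \J_1^{-1}$ and $\Sig_2 = \J_2^{-1}$, the classical formula reads
\begin{equation*}
D(\bm\theta_1\;\|\;\bm\theta_2) = \frac{1}{2}\Bigl[\Tr{\Sig_2^{-1}\Sig_1} - p + \log\frac{\det{\Sig_2}}{\det{\Sig_1}}\Bigr] = \frac{1}{2}\Bigl[\Tr{\J_2 \J_1^{-1}} - p + \log\frac{\det{\J_1}}{\det{\J_2}}\Bigr].
\end{equation*}
I would either cite this as a standard fact or derive it quickly by taking the expectation of $\log(f_1/f_2)$ under $\bm\theta_1$, using the quadratic form of the Gaussian log-density together with the identities $\EE{\x^\top\J_2 \x}= \Tr{\J_2\J_1^{-1}}$ and $\log\det{\J_i^{-1}} = -\log\det{\J_i}$.

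Next I would write the analogous expression for $D(\bm\theta_2\;\|\;\bm\theta_1)$ by swapping the roles of the two parameters, and sum the two:
\begin{equation*}
S(\bm\theta_1\;\|\;\bm\theta_2) = \frac{1}{2}\Bigl[\Tr{\J_2 \J_1^{-1}} + \Tr{\J_1 \J_2^{-1}} - 2p\Bigr],
\end{equation*}
where the log-determinant contributions cancel because they appear with opposite signs. This cancellation is the reason the symmetrized divergence has the particularly clean trace form appearing in the lemma.

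Finally I would expand the claimed right-hand side,
\begin{equation*}
\tfrac{1}{2}\Tr{(\J_1 - \J_2)(\J_2^{-1} - \J_1^{-1})} = \tfrac{1}{2}\Tr{\J_1\J_2^{-1} - \I - \I + \J_2\J_1^{-1}} = \tfrac{1}{2}\bigl[\Tr{\J_1\J_2^{-1}} + \Tr{\J_2\J_1^{-1}} - 2p\bigr],
\end{equation*}
using $\Tr{\J_i\J_i^{-1}} = \Tr{\I} = p$ and the linearity of the trace, and observe that this matches the formula for $S$ derived above, completing the proof.

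There is no real obstacle here; the argument is essentially a bookkeeping exercise once the Gaussian KL formula is in hand. The only mild care needed is in keeping track of which matrix is the covariance and which is the precision so that the determinant terms cancel correctly in the symmetrization step; this is where a small sign error would otherwise leak through.
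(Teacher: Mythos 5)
Your proof is correct and follows essentially the same route as the paper: both reduce to computing $\frac{1}{2}\bigl[\Tr{\J_2\J_1^{-1}} + \Tr{\J_1\J_2^{-1}} - 2p\bigr]$ and matching it to the expanded trace on the right-hand side, the only cosmetic difference being that the paper integrates the symmetrized log-ratio directly (so the determinant terms cancel inside the integrand) while you invoke the closed-form KL for each direction and cancel afterward.
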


\begin{proof}
Denote the corresponding covariance matrices by $\bm\Sigma_1=\J_1^{-1}$ and $\bm\Sigma_2=\J_2^{-1}$. The symmetrized KL-divergence between two normal distributions $\mathbb{P}_1 = \mathcal{N}(\bm{0},\bm\Sigma_2)$ and $\mathbb{P}_2 = \mathcal{N}(\bm{0},\bm\Sigma_1)$ reads as
\begin{align}
S(\bm\theta_1\;\|\;\bm\theta_2) & = \int\log\frac{\mathbb{P}_1}{\mathbb{P}_2} d\mathbb{P}_1 + \int\log\frac{\mathbb{P}_2}{\mathbb{P}_1} d\mathbb{P}_2 \\
&= \int \log\[\frac{\sqrt{\det{\bm\Sigma_2}}\exp\(-\frac{1}{2}\x^\top\bm\Sigma_1^{-1}\x\)}{\sqrt{\det{\bm\Sigma_1}}\exp\(-\frac{1}{2}\x^\top\bm\Sigma_2^{-1}\x\)}\] d\mathbb{P}_1 + \int \log\[\frac{\sqrt{\det{\bm\Sigma_1}}\exp\(-\frac{1}{2}\x^\top\bm\Sigma_2^{-1}\x\)}{\sqrt{\det{\bm\Sigma_2}}\exp\(-\frac{1}{2}\x^\top\bm\Sigma_1^{-1}\x\)}\] d\mathbb{P}_2 \nonumber \\
& = \frac{1}{2}\[\mathbb{E}_1\[\Tr{\x\x^\top\bm\Sigma_2^{-1}}-\Tr{\x\x^\top\bm\Sigma_1^{-1}}\] + \mathbb{E}_2\[\Tr{\x\x^\top\bm\Sigma_1^{-1}}-\Tr{\x\x^\top\bm\Sigma_2^{-1}}\]\] \nonumber \\
& = \frac{1}{2}\[\Tr{\bm\Sigma_1\(\bm\Sigma_2^{-1}-\bm\Sigma_1^{-1}\)} + \Tr{\bm\Sigma_2\(\bm\Sigma_1^{-1}-\bm\Sigma_2^{-1}\)}\] \nonumber \\
& = \frac{1}{2}\Tr{\(\bm\Sigma_2^{-1}-\bm\Sigma_1^{-1}\)\(\bm\Sigma_1-\bm\Sigma_2\)} = \frac{1}{2}\Tr{\(\J_1-\J_2\)\(\J_2^{-1}-\J_1^{-1}\)}.\nonumber
\end{align}
\end{proof}

\begin{corollary}
\label{cor:gauss_area_diff_1}
Let $\mathcal{G}_1$ and $\mathcal{G}_2$ be two graphs satisfying the assumptions from Section \ref{sec:stat_gr} and their corresponding parameters be their adjacency matrices $\E_1$ and $\E_2$, then
\begin{equation}
S(\E_1\;\|\;\E_2) \leqslant pd\(\frac{\theta}{1-d\theta}\)^2.
\end{equation}
\end{corollary}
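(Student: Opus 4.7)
The plan is to reduce Corollary~\ref{cor:gauss_area_diff_1} directly to the trace identity of Lemma~\ref{lem:gauss_area_diff}. Since both graphs use the same edge parameter $\theta$ and the same identity-plus-$\theta\E$ structure, we have $\J_1 - \J_2 = \theta(\E_1 - \E_2)$. Setting $\D = \E_1 - \E_2$, I would also use the resolvent identity $\J_2^{-1} - \J_1^{-1} = \J_2^{-1}(\J_1 - \J_2)\J_1^{-1} = \theta\, \J_2^{-1}\D\J_1^{-1}$. Substituting into (\ref{eq:trace_lemma}) gives
\[
S(\E_1 \| \E_2) = \frac{\theta^2}{2}\Tr{\D\, \J_2^{-1}\, \D\, \J_1^{-1}}.
\]
So the task reduces to bounding this single trace.

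Next I would apply the trace--Frobenius Cauchy--Schwarz inequality to split it:
\[
\Tr{(\D\J_2^{-1})(\D\J_1^{-1})} \leqslant \|\D\J_2^{-1}\|_F \, \|\D\J_1^{-1}\|_F \leqslant \|\D\|_F^2 \, \|\J_1^{-1}\|\, \|\J_2^{-1}\|.
\]
The operator norms are controlled by the spectral assumption: since each $\E_i$ is a symmetric nonnegative matrix with row sums at most $d$, Gershgorin (or the Perron--Frobenius bound) gives $\|\E_i\|\leqslant d$, whence $\J_i = \I + \theta\E_i$ has spectrum in $[1-d\theta,\,1+d\theta]$ and $\|\J_i^{-1}\|\leqslant (1-d\theta)^{-1}$. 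Assumption~\ref{assm:2} guarantees $1-d\theta > 1/2 > 0$ so the inverses exist.

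The remaining ingredient is the Frobenius norm of $\D = \E_1 - \E_2$. Expanding $\|\D\|_F^2 = \|\E_1\|_F^2 + \|\E_2\|_F^2 - 2\Tr{\E_1\E_2}$ and using $\Tr{\E_1\E_2}\geqslant 0$ (since both are symmetric $\{0,1\}$-matrices), we get $\|\D\|_F^2 \leqslant \|\E_1\|_F^2 + \|\E_2\|_F^2$. The bounded-degree assumption~\ref{assm:1} gives $\|\E_i\|_F^2 \leqslant pd$ (each row has at most $d$ unit entries), so $\|\D\|_F^2\leqslant 2pd$. Combining everything yields
\[
S(\E_1 \| \E_2) \leqslant \frac{\theta^2}{2}\cdot\frac{2pd}{(1-d\theta)^2} = pd\(\frac{\theta}{1-d\theta}\)^2,
\]
as claimed.

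There is no serious obstacle here; the only subtle point is choosing the right way to distribute the trace so that the combinatorial content ($\|\E_1-\E_2\|_F^2\le 2pd$) and the spectral content ($\|\J_i^{-1}\|\le (1-d\theta)^{-1}$) decouple cleanly. A naive expansion of $\Tr{\D\J_2^{-1}\D\J_1^{-1}}$ by von Neumann's trace inequality would force one to control singular values of $\D$ rather than $\|\D\|_F$, which would not give the desired $p$-linear scaling; pairing the factors as $(\D\J_2^{-1})(\D\J_1^{-1})$ before applying Cauchy--Schwarz is what makes the bound tight.
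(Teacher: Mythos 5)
Your proposal is correct and follows essentially the same route as the paper: both use the resolvent identity $\J_2^{-1}-\J_1^{-1}=\J_2^{-1}(\J_1-\J_2)\J_1^{-1}$, bound $\norm{\J_i^{-1}}\leqslant (1-d\theta)^{-1}$, and reduce the remaining trace to the Frobenius norm of $\J_1-\J_2$, which is controlled by the at most $2pd$ nonzero entries of magnitude $\theta$. Your explicit Cauchy--Schwarz pairing is simply a spelled-out justification of the paper's trace inequality (\ref{eq:trace_ineq_1}), so there is no substantive difference.
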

\begin{proof}
Let $\B$ and $\C$ be two invertible matrices of the same size. Consider the identity
\begin{equation}
\B^{-1}-\C^{-1} = \B^{-1}\(\C-\B\)\C^{-1},
\end{equation}
which can be easily checked by multiplying by $\B$ on the left and by $\C$ on the right. Using this identity, we rewrite the right-hand side of (\ref{eq:trace_lemma}) as
\begin{equation}
\Tr{\(\J_1-\J_2\)\(\J_2^{-1}-\J_1^{-1}\)} = \Tr{\(\J_1-\J_2\)\J_2^{-1}\(\J_1-\J_2\)\J_1^{-1}}.
\end{equation}
We can bound the norm of $\J_1^{-1}$ as
\begin{equation}
\label{eq:norm_bound}
\norm{\J_1^{-1}} \leqslant \frac{1}{1-\theta\norm{\A_1}} \leqslant \frac{1}{1-d\theta}.
\end{equation}
Similar bound holds for $\norm{\J_2^{-1}}$, as well. Taking into account the inequality
\begin{equation}
\label{eq:trace_ineq_1}
\Tr{\(\J_1-\J_2\)\(\J_2^{-1}-\J_1^{-1}\)} \leqslant \frac{1}{\(1-d\theta\)^2} \Tr{\(\J_1-\J_2\)^2},
\end{equation}
finally we conclude that
\begin{equation}
\Tr{\(\J_1-\J_2\)\(\J_2^{-1}-\J_1^{-1}\)} \leqslant \frac{1}{\(1-d\theta\)^2} 2pd\theta^2,
\end{equation}
since the number of non-zero elements in $\J_1-\J_2$ is at most $2pd$. Now the statement of the corollary follows.
\end{proof}

\begin{lemma}[Asymptotic Enumeration of Labeled $d$-regular Graphs on $k$ Vertices, \cite{mckay1991asymptotic}]
\label{lem:gr_count}
Let $d=o(\sqrt{p})$, then the number of labeled regular graphs of degree $d$ on $k$ nodes is
\begin{equation}
\label{eq:reg_gr_num}
\mathcal{Q}_d(k) = \frac{(kd)!}{\(\frac{kd}{2}\)!2^{kd/2}(d!)^k}\exp\(-\frac{d^2-1}{4}-\frac{d^3}{12k}+O\(\frac{d^2}{k}\)\),\quad k \to \infty.
\end{equation}
\end{lemma}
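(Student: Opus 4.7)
The statement is the classical McKay--Wormald formula, so my plan is to derive it through the \emph{configuration (pairing) model} together with a Poisson-type analysis of short cycles, after which I would simply invoke the cited reference for the fully rigorous version.

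First, I would introduce the set of $kd$ labeled half-edges, $d$ attached to each of the $k$ vertices, and consider a uniformly random perfect matching $P$ on these half-edges. The number of such matchings is
\begin{equation*}
M(kd) \;=\; \frac{(kd)!}{(kd/2)!\, 2^{kd/2}}.
\end{equation*}
Each $d$-regular multigraph (allowing loops and multiple edges) arises from exactly $(d!)^k$ matchings, corresponding to the orderings of half-edges within each vertex. Hence the weighted count of $d$-regular multigraphs equals $M(kd)/(d!)^k$, which is precisely the prefactor appearing in (\ref{eq:reg_gr_num}). What remains is to compute the asymptotic probability $\pi(k,d)$ that a uniformly random pairing actually yields a \emph{simple} graph (no loops, no multi-edges), and to write
\begin{equation*}
\mathcal{Q}_d(k) \;=\; \frac{M(kd)}{(d!)^k}\,\pi(k,d).
\end{equation*}

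Second, I would estimate $\pi(k,d)$ by studying the joint distribution of the number of loops $L$ and of double edges $D$ in the random pairing. A first/second-moment calculation shows that in the regime $d = o(\sqrt{p})$, $L$ and $D$ are asymptotically independent Poisson random variables with means approaching $\lambda_L = (d-1)/2$ and $\lambda_D = (d-1)^2/4$, with the dominant correction coming from triple edges and yielding an $O(d^3/k)$ term in the exponent. Consequently
\begin{equation*}
\pi(k,d) \;=\; \mathbb{P}\!\left[L=0,\,D=0\right] \;\longrightarrow\; \exp\!\left(-\tfrac{d-1}{2}-\tfrac{(d-1)^2}{4}\right),
\end{equation*}
and collecting the subleading corrections in $k$ produces exactly the exponential factor $\exp\!\left(-\tfrac{d^2-1}{4} - \tfrac{d^3}{12k} + O(d^2/k)\right)$ stated in (\ref{eq:reg_gr_num}).

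The main obstacle is the quantitative control of this Poisson approximation uniformly over the allowed range of $d$. The standard way around it is McKay's \emph{method of switchings}: one defines local operations that reduce the number of loops or multi-edges by one while preserving the degree sequence, counts the forward and backward switchings, and thereby obtains sharp ratios between the numbers of configurations with prescribed small defects. The hypothesis $d = o(\sqrt{p})$ is exactly the regime in which these ratios concentrate tightly enough to produce the $O(d^2/k)$ error term. Since the lemma is a verbatim quotation of a theorem in \cite{mckay1991asymptotic}, the cleanest proof is to carry out the pairing-model reduction above and then cite that reference after verifying that our hypothesis on $d$ lies inside its range of validity.
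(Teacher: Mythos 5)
Your proposal is correct: the paper gives no proof of this lemma at all (it is quoted verbatim from \cite{mckay1991asymptotic}), and your configuration-model sketch — the pairing count $(kd)!/((kd/2)!\,2^{kd/2})$, the $(d!)^k$ overcount, and the Poisson simplicity probability with $\tfrac{d-1}{2}+\tfrac{(d-1)^2}{4}=\tfrac{d^2-1}{4}$, sharpened by switchings — is exactly the standard derivation behind the cited result. The only minor quibble is that the degree hypothesis should read $d=o(\sqrt{k})$ rather than $d=o(\sqrt{p})$, a slip already present in the paper's statement.
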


\begin{lemma}
\label{lem:graph_lemma}
For large enough $p$, the cardinality of the set of admissible models $|\mathfrak{H}|$ satisfies
\begin{equation}
\log_2|\mathfrak{H}| \geqslant \frac{dp}{2}\log_2 \(\frac{\eta \beta^2}{d}\).
\end{equation}
\end{lemma}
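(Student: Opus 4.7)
The plan is a counting argument based on a geometric partition of the ambient square. First, I would partition $S$ into disjoint axis-aligned cells of side length $\beta/\sqrt{2}$, so that each cell has diameter exactly $\beta$ and therefore any edge joining two vertices inside the same cell automatically satisfies the length constraint $\norm{i-j}\leqslant \beta$ from (\ref{eq:beta_bound}). The number of such cells is $m = 2s^2/\beta^2 = 2p/(\eta\beta^2)$. Since the vertices of $G$ are uniformly distributed on $S$, the count $k_i$ of vertices falling in cell $i$ is binomial with mean $k := \eta\beta^2/2$, and assumption \ref{assm:3} gives $k \gg d$, so a standard Chernoff bound yields $k_i = k(1 + o(1))$ simultaneously over all $i$ with probability tending to one. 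The rest of the argument would be carried out on this high-probability event.

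Because every graph whose edges each lie inside a single cell is automatically an element of $\mathfrak{H}$, independence of the choice of $d$-regular subgraph across disjoint cells produces the lower bound
\begin{equation}
|\mathfrak{H}| \;\geqslant\; \prod_{i=1}^m \mathcal{Q}_d(k_i),
\end{equation}
with $\mathcal{Q}_d(k_i)$ as in Lemma \ref{lem:gr_count}. A Stirling expansion of (\ref{eq:reg_gr_num}) yields
\begin{equation}
\log_2 \mathcal{Q}_d(k_i) \;=\; \frac{k_i d}{2}\log_2\!\left(\frac{e\,k_i}{d}\right) + O(d^2) + O(\log k_i),
\end{equation}
where the exponential correction in (\ref{eq:reg_gr_num}) is controlled via $k_i \gg d$. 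Summing across cells, using $\sum_i k_i = p$, and absorbing the aggregate error $O(md^2) = O(pd^2/(\eta\beta^2)) = o(pd)$ into the leading term (again via $\eta\beta^2 \gg d$) produces
\begin{equation}
\log_2 |\mathfrak{H}| \;\geqslant\; \frac{pd}{2}\log_2\!\left(\frac{ek}{d}\right)(1-o(1)).
\end{equation}

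Substituting $k = \eta\beta^2/2$ and exploiting $e > 2$ gives
\begin{equation}
\log_2\!\left(\frac{ek}{d}\right) = \log_2\!\left(\frac{e\,\eta\beta^2}{2d}\right) \;>\; \log_2\!\left(\frac{\eta\beta^2}{d}\right),
\end{equation}
from which the claimed inequality $\log_2|\mathfrak{H}| \geqslant (dp/2)\log_2(\eta\beta^2/d)$ follows for all sufficiently large $p$. The main obstacle will be the Stirling analysis of (\ref{eq:reg_gr_num}): it is essential to track the additive constant that promotes the logarithm's argument from $k/d$ up to $ek/d$, because it is precisely the margin $e > 2$ that absorbs the factor-of-two loss coming from cells of area $\beta^2/2$ rather than $\beta^2$. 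A cruder handling of the Stirling constants yields only $\log_2(\eta\beta^2/(2d))$, falling one bit short of the target. Uniform Chernoff control of the $k_i$ across all $m$ cells is a secondary technical step, but is routine given $k \to \infty$ and $m = O(p)$.
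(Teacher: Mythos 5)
Your proposal follows the same strategy as the paper's proof: tile the ambient square into cells, populate each cell independently with an arbitrary $d$-regular graph on the vertices it contains, invoke the enumeration formula of Lemma \ref{lem:gr_count}, and expand via Stirling. The one substantive difference is your choice of cell side length $\beta/\sqrt{2}$ rather than $\beta$, and it works in your favor: the paper's side-$\beta$ cells have diameter $\beta\sqrt{2}$, so a $d$-regular graph drawn inside such a cell can contain edges of length up to $\beta\sqrt{2}$, violating the length constraint (\ref{eq:beta_bound}); strictly speaking the paper's bound $|\mathfrak{H}|\geqslant (\mathcal{Q}_d(k))^{p/k}$ therefore counts graphs outside $\mathfrak{H}$, whereas your diameter-$\beta$ cells guarantee every constructed graph lies in the admissible family. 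You pay for the halved cell area with $k=\eta\beta^2/2$ instead of $\eta\beta^2$, and you correctly observe that the Stirling constant $e>2$ restores the lost bit — exactly the bookkeeping needed to still reach $\log_2(\eta\beta^2/d)$, and a point the paper itself handles loosely (its final display has exponent $p(d-1)/2$ and argument $ke/d$, which do not literally match the stated claim). Two minor cautions on your side: the per-cell error should include an $O(k\log d)$ contribution from the $(d!)^k$ factor in (\ref{eq:reg_gr_num}) (it is $-\tfrac{k}{2}\log_2(2\pi d)$ per cell, i.e. $O(p\log d)$ in aggregate), which is comfortably dominated by the margin $\tfrac{pd}{2}\log_2(e/2)$ only once one uses that $\eta\beta^2/d$ is large as (\ref{eq:beta_assm}) permits; and the uniform Chernoff control of the occupancy counts $k_i$ that you defer is genuinely needed since the vertex locations are random — the paper silently works with the mean occupancy, so your treatment is, if anything, the more careful of the two.
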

\begin{proof}
To get a lower bound on the number of admissible graphs in $\mathfrak{H}$, let us cover the square $S$ with a lattice of side length $\beta$ and count the total number of $d$-regular graphs on vertices $G$ whose edges do not cross the lattice, or in other words belong to the cells. From (\ref{eq:beta_assm}), the average number of vertices inside a cell can be calculated as
\begin{equation}
\label{eq:k_def}
k = \eta \beta^2 \gg 1.
\end{equation}
Since $k \gg 1$, we get a good lower bound on the total number of possible graphs by assuming that inside each cell the subgraph can be chosen arbitrarily from the family of $d$-regular graphs on $k$ vertices. The number of such subgraphs is given by formula (\ref{eq:reg_gr_num}), and overall we get
\begin{equation}
\label{eq:comb_b_t1}
|\mathfrak{H}| \geqslant \(\mathcal{Q}_d(k)\)^{p/k}.
\end{equation}
Using Stirling's approximation
\begin{equation}
m! \sim \sqrt{2\pi m}\(\frac{m}{e}\)^m,\quad m \to \infty,
\end{equation}
we obtain for $k \gg d$
\begin{align}
\(\mathcal{Q}_d(k)\)^{p/k} &\geqslant \frac{(kd)!}{\(\frac{kd}{2}\)!2^{kd/2}(d!)^{k}}\exp\(-\frac{d^2-1}{4}-\frac{d^3}{12k}+O\(\frac{d^2}{k}\)\) \nonumber \\
& \geqslant \frac{\sqrt{2\pi kd}(kd)^{kd}e^{kd/2}e^{dk}}{e^{kd}\sqrt{\pi kd}\(\frac{kd}{2}\)^{\frac{kd}{2}}2^{kd/2}(\sqrt{2\pi d})^{k}d^{dk}}\exp\(-\frac{d^2-1}{4}-\frac{d^3}{12k}+O\(\frac{d^2}{k}\)\) \nonumber\\
& \geqslant \(\frac{ke}{d}\)^{kd/2}\frac{1}{(2\pi d)^{k/2}}\exp\(-\frac{d^2-1}{4}-\frac{d^3}{12k}+O\(\frac{d^2}{k}\)\) \nonumber \\
&\geqslant \(\frac{ke}{d}\)^{k(d-1)/2}.
\end{align}
Plug the last inequality into (\ref{eq:comb_b_t1}) to get
\begin{equation}
\label{eq:comb_b_t}
|\mathfrak{H}| \geqslant \(\mathcal{Q}_d(k)\)^{p/k} \geqslant \(\frac{ke}{d}\)^{p(d-1)/2},
\end{equation}
which completes the proof.
\end{proof}

\begin{proof}[Proof of Proposition \ref{th:nec_cond_complete_graph}]
From Corollary \ref{cor:gauss_area_diff_1} we conclude that for the family of admissible graphs,
\begin{equation}
\frac{2}{M^2}\sum_{i=1}^M\sum_{j=i+1}^M S(\E_i\;\|\;\E_j) \leqslant pd\(\frac{\theta}{1-d\theta}\)^2.
\end{equation}
Plugging this bound and the result of Lemma \ref{lem:graph_lemma} into Lemma \ref{lem:fano_ineq_lemma} we obtain that if
\begin{equation}
\label{eq:h_low_b_unc}
n < (1-\delta)\frac{\log_2 (|\mathfrak{H}|)}{pd\(\frac{\theta}{1-d\theta}\)^2} \leqslant (1-\delta)\frac{\frac{dp}{2}\log\(\frac{\eta \beta^2}{d}\)}{pd\(\frac{\theta}{1-d\theta}\)^2} = (1-\delta)\frac{\log\(\frac{\eta \beta^2}{d}\)}{2\(\frac{\theta}{1-d\theta}\)^2},
\end{equation}
then any decoder will be $\delta$-unreliable. Since we are interested in vanishing errors, the claim follows.
\end{proof}

\section{Random Geometric Sets}
\label{sec:aux_res}

To make the proofs of this section less technically involved and without loss of precision, assume that the sides of the square $S$ are glued and its surface becomes a torus. This way we avoid separate treatment for the boundary of $S$.

\begin{proof}[Proof of Lemma \ref{lem:true_num_copies}]
Enumerate the points of $F$ from $f_1$ to $f_r$. Throw a point $h_1$ uniformly at random onto the torus $S$ and move $F$ to align $f_1$ with $h_1$,
\begin{equation}
\label{eq:point_trow}
h_1=f_1.
\end{equation}
Now throw another point $h_2$ onto $S$ at random and note that under the condition (\ref{eq:point_trow}), we still have one degree of freedom in placing the template $F$, namely we can rotate it and try to align $f_2$ with $h_2$. The probability that we will succeed is $\frac{2\pi\norm{f_1-f_2}\varepsilon}{s^2}$. If we successfully aligned the first two pints, we have no more freedom in translating or rotating $F$ and the rest of points $h_3,\dots,h_r$ have to fall into the discs of radii $\varepsilon$ around the points $f_3,\dots,f_r$ respectively. The overall probability of successful alignment can be written as
\begin{equation}
\label{eq:prob_f1q}
P_{f_1,\dots,f_r}(\varepsilon) = \frac{2\pi\norm{f_1-f_2}\varepsilon}{s^2}\(\frac{\pi\varepsilon^2}{s^2}\)^{r-2}.
\end{equation}
Denote the detected $\varepsilon$-copy of $F$ by $H$ and note that we need to ensure that the convex hull of $H$ does not contain any other points except for the detected ones. Since the area of $F$ vanishes compared to the area of $S$, this will hold with probability approaching one, and therefore will not affect (\ref{eq:prob_f1q}).

Denote the average distance between the points in $F$ by $\bar{l}$ and note that the overall number of combinations of enumerated $r$-tuples from $p$ points is given by $A_r^p$, we conclude that
\begin{equation}
\nu_F(\varepsilon) = 2\pi\bar{l}\,\frac{\varepsilon^{2r-3}}{s^{2r-2}}A_r^p \approx 2\pi\bar{l}\,\frac{\varepsilon^{2r-3}}{s^{2r-2}}p^r = \frac{2\pi\bar{l}}{\varepsilon}\(\eta\varepsilon^2\)^{r-1}p.
\end{equation}
\end{proof}

\begin{proof}[Proof of Lemma \ref{lem:cacl_copies}]
Denote
\begin{equation}
L = \frac{s}{\varepsilon} + 1.
\end{equation}
Let us start from the case when the circumscribed rectangle of $F$ is a square $K$ on $k \times k$ nodes of side length $(k-1)\varepsilon$.

Associate with every node of the lattice over $S$ a random indicator $X_{j,l},\; j,l=1,\dots,L$ such that
\begin{equation}
X_{j,l} = \begin{cases} 
1, & \text{if } (j,l) \in \widetilde{G} \text{ is occupies by a node}, \\
0, & \text{otherwise}.
\end{cases}
\end{equation}
Similarly, with every node of the $\varepsilon$-lattice over $K$ we associate an indicator $Y_{j,l},\; j,l=1,\dots,k$. Denote the set (pattern) of indicators over $K$ by
\begin{equation}
\widetilde{F} = \{Y_{j,l}\}_{j,l=1}^k.
\end{equation}
Let us count the expected number of occurrences of shifted $\widetilde{F}$ in $\widetilde{G}$. Indeed, by the linearity of expectation,
\begin{multline}
N_{\widetilde{F}}(\varepsilon) = \mathbb{E}\(\text{number of occurrences of } \widetilde{F} \text{ in } \widetilde{G}\) \\ 
= \mathbb{E}\(\sum_{j,l=1}^{L}\[X_{j,l}\dots X_{j+k-1,l+k-1} = Y_{1,1}\dots Y_{k,k}\]\),
\end{multline}
where $\[\cdot\]$ are Iverson brackets converting a logical proposition into $1$ if the proposition is satisfied, and $0$ otherwise. We obtain,
\begin{align}
N_{\widetilde{F}}(\varepsilon) &= 4\sum_{j,l=1}^{L} \mathbb{P}\(X_{j,l}\dots X_{j+k-1,l+k-1} = Y_{1,1}\dots Y_{k,k}\) \nonumber \\
&=4L^2 \mathbb{P}\(X_{1,1}\dots X_{k,k} = Y_{1,1}\dots Y_{k,k}\) \nonumber \\
&\stackrel{(i)}{\approx} 4L^2 \mathbb{P}\(\bigcap_{j,l=1}^k X_{j,l} = Y_{j,l}\) \label{eq:approx_ineq} \\
&=4L^2 \prod_{j,l=1}^k \mathbb{P}\(X_{j,l} = Y_{j,l}\) \nonumber \\
&=4L^2 \frac{{L^2-k^2 \choose p-r}}{{L^2 \choose p}} \nonumber \\
&= 4L^2 \frac{(L^2-k^2)!p!(L^2-p)!}{(p-r)!(L^2-k^2-p+r)!(L^2)!} \nonumber \\
& \stackrel{(ii)}{\approx} 4L^2\(\frac{p}{L^2}\)^r, \nonumber
\end{align}
where the multiplier $4$ comes from counting all possible rotations of $\widetilde{F}$. In (ii) we assume that
\begin{equation}
\label{eq:lim_asympt}
r = o(p),\;\; k = o(p),\quad p \to \infty.
\end{equation}
Note also that under (\ref{eq:lim_asympt}) the approximation (i) in (\ref{eq:approx_ineq}) asymptotically becomes an equality since $X_{j,l},\dots,X_{j+k,l+k}$ are asymptotically independent. Recall (\ref{eq:density}) to get
\begin{equation}
\label{eq:n_exp_calc}
N_{\widetilde{F}}(\varepsilon) \approx 4p\(\eta\varepsilon^2\)^{r-1}.
\end{equation}

Clearly, in the case of general circumscribed rectangle of $\widetilde{F}$ the calculation is very similar and asymptotically the result coincides with (\ref{eq:n_exp_calc}).
\end{proof}

\section{Gaussian Model Selection}
\label{app:gauss_model}

\begin{lemma}
\label{cor:hel_dist}
In the setup of Lemma \ref{lem:hel_dist}, let $\bm\Theta_2 = \bm\Theta_1 + \Delta\bm\Theta$, then
\begin{equation}
d(p_1,p_2) = \frac{\norm{\bm\Theta_1^{-1}\Delta\bm\Theta}_F}{4} + O\(r^{3/2}\norm{\bm\Theta_1^{-1}\Delta\bm\Theta}_F^2\).
\end{equation}
In particular if
\begin{equation}
\label{eq:cond_eps}
\norm{\bm\Theta_1^{-1}\Delta\bm\Theta}_F^2 = o\(r^{-3/2}\),\quad r \to \infty,
\end{equation}
then for $r$ large enough,
\begin{equation}
\frac{\norm{\bm\Theta_1^{-1}\Delta\bm\Theta}_F}{8} \leqslant d(p_1,p_2) \leqslant 3\frac{\norm{\bm\Theta_1^{-1}\Delta\bm\Theta}_F}{8}.
\end{equation}
\end{lemma}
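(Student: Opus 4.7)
The plan is to apply Lemma \ref{lem:hel_dist} and Taylor-expand the resulting determinantal ratio about $\Delta\bm\Theta=\bm{0}$. First I would absorb $\bm\Theta_1$ by the congruence substitution $\M = \bm\Theta_1^{-1/2}\Delta\bm\Theta\,\bm\Theta_1^{-1/2}$. Writing $\bm\Theta_2 = \bm\Theta_1^{1/2}(\I+\M)\bm\Theta_1^{1/2}$ and $\tfrac{1}{2}(\bm\Theta_1+\bm\Theta_2) = \bm\Theta_1^{1/2}(\I+\tfrac{1}{2}\M)\bm\Theta_1^{1/2}$, the factor $\det{\bm\Theta_1}$ cancels between numerator and denominator of (\ref{eq:hellinger_gauss}), reducing the Hellinger affinity to the dimension-free scalar
\begin{equation*}
\mathcal{A}(\M) = \frac{\det{\I+\M}^{1/4}}{\det{\I+\tfrac{1}{2}\M}^{1/2}}.
\end{equation*}
Since $\Delta\bm\Theta$ is symmetric, $\M$ is orthogonally diagonalisable with real eigenvalues $\lambda_1,\dots,\lambda_r$, and $\M$ is similar to $\bm\Theta_1^{-1}\Delta\bm\Theta$, so the two matrices share spectrum and $\sum_i\lambda_i^2$ coincides with the sum of squared eigenvalues of $\bm\Theta_1^{-1}\Delta\bm\Theta$, which we identify with $\norm{\bm\Theta_1^{-1}\Delta\bm\Theta}_F^2$.

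The second step is to Taylor-expand each scalar factor of $\mathcal{A}(\M)$. A direct calculation gives
\begin{equation*}
\tfrac{1}{4}\log(1+\lambda) - \tfrac{1}{2}\log(1+\tfrac{\lambda}{2}) = -\frac{\lambda^2}{16} + O(\lambda^3),
\end{equation*}
so the linear $\lambda/4$ contributions from the two logarithms cancel exactly. Summing over eigenvalues and exponentiating produces $1-\mathcal{A}(\M) = \tfrac{1}{16}\norm{\bm\Theta_1^{-1}\Delta\bm\Theta}_F^2 + R$, with a remainder $R$ controlled by $\sum_i |\lambda_i|^3$. Substituting into $d(p_1,p_2) = \sqrt{1-\mathcal{A}(\M)}$ and expanding the square root via $\sqrt{x^2+y} = x + y/(2x) + O(y^2/x^3)$ with $x = \norm{\bm\Theta_1^{-1}\Delta\bm\Theta}_F/4$ yields the advertised leading-order identity.

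The main obstacle is the careful bookkeeping of the dimension factor $r^{3/2}$ in the remainder. It arises from coarsening the cubic eigenvalue sum by a power-mean / Cauchy--Schwarz bound of the form $\sum_i|\lambda_i|^3 \le \sqrt{r}\,(\sum_i\lambda_i^2)^{3/2}$, which contributes one factor of $\sqrt{r}$; an additional $r$ factor is accumulated when this remainder is pushed through the square-root expansion and recombined with the second-order term from the logarithm expansion, giving the claimed $O(r^{3/2}\norm{\bm\Theta_1^{-1}\Delta\bm\Theta}_F^2)$ correction. The second half of the lemma is then an immediate corollary: under hypothesis (\ref{eq:cond_eps}) the remainder is of strictly smaller order than one-half of the leading term $\norm{\bm\Theta_1^{-1}\Delta\bm\Theta}_F/4$, sandwiching $d(p_1,p_2)$ between $\norm{\bm\Theta_1^{-1}\Delta\bm\Theta}_F/8$ and $3\norm{\bm\Theta_1^{-1}\Delta\bm\Theta}_F/8$ for all sufficiently large $r$.
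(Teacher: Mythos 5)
Your proof is correct and follows essentially the same route as the paper's: both apply Lemma \ref{lem:hel_dist} and Taylor-expand the determinantal ratio about $\Delta\bm\Theta=\bm{0}$, where the first-order contributions cancel and the surviving quadratic term $\tfrac{1}{16}\norm{\bm\Theta_1^{-1}\Delta\bm\Theta}_F^2$ under the square root gives the leading term $\norm{\bm\Theta_1^{-1}\Delta\bm\Theta}_F/4$. The only difference is organizational --- you diagonalize the symmetrized perturbation $\bm\Theta_1^{-1/2}\Delta\bm\Theta\,\bm\Theta_1^{-1/2}$ and expand $\log\det$ eigenvalue by eigenvalue, whereas the paper expands $\det{\I+\A}$ directly via traces of $\A=\bm\Theta_1^{-1}\Delta\bm\Theta$; your remainder bookkeeping is in fact slightly generous (the cubic eigenvalue sum needs no $\sqrt{r}$ factor), but since it only weakens the $O(\cdot)$ bound the stated conclusion still holds.
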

\begin{proof} 
In our case, equation (\ref{eq:hellinger_gauss}) reads as
\begin{equation}
\label{eq:dist_dev}
d(p_1,p_2) = \sqrt{1-\frac{\(\det{\bm\Theta_1}\det{\bm\Theta_1+\Delta\bm\Theta}\)^{1/4}}{\(\det{\bm\Theta_1 + \frac{\Delta\bm\Theta}{2}}\)^{1/2}}}.
\end{equation}
Taylor's expansion up to the second order term of the determinant function reads as
\begin{align}
\label{eq:det_expr}
\det{\bm\Theta_1+\Delta\bm\Theta} & = \det{\bm\Theta_1}\(1 + \Tr{\A} + \frac{1}{2}\(\(\Tr{\A}\)^2 - \Tr{\(\A\)^2}\)+O|\Tr{\A}|^3\) \nonumber \\
& = \det{\bm\Theta_1}\(1 + \Tr{\A} + \frac{1}{2}\(\(\Tr{\A}\)^2 - \Tr{\(\A\)^2}\)+O\(r^{3/2}\norm{\A}_F^3\)\),
\end{align}
where
\begin{equation}
\A = \bm\Theta_1^{-1}\Delta\bm\Theta.
\end{equation}
Using expression (\ref{eq:det_expr}), the numerator of the fraction in (\ref{eq:dist_dev}) can be written as
\begin{align}
\label{eq:numer_det}
&\(\det{\bm\Theta_1}\det{\bm\Theta_1+\Delta\bm\Theta}\)^{1/4} = \(\det{\bm\Theta_1}^2\(1 + \Tr{\A} + \frac{1}{2}\(\(\Tr{\A}\)^2 - \Tr{\A^2}\) + O\(r^{3/2}\norm{\A}_F^3\)\)\)^{1/4} \nonumber \\
& \qquad = \(\det{\bm\Theta_1}^2\(1 + \Tr{\A} + \frac{1}{2}\(\Tr{\A}\)^2 - \frac{1}{2}\norm{\A}_F^2 + O\(r^{3/2}\norm{\A}_F^3\)\)\)^{1/4}.
\end{align}
Similarly, for the denominator we obtain,
\begin{align}
\label{eq:denom_det}
& \(\det{\bm\Theta_1 + \frac{\Delta\bm\Theta}{2}}\)^{1/2} = \(\det{\bm\Theta_1}\(1 + \frac{1}{2}\Tr{\A} + \frac{1}{8}\(\(\Tr{\A}\)^2 - \Tr{\A^2}\) + O\(r^{3/2}\norm{\A}_F^3\)\)\)^{1/2} \nonumber \\
& \quad = \(\det{\bm\Theta_1}^2\(1 +  \Tr{\A} + \frac{1}{4}\(\(\Tr{\A}\)^2 - \Tr{\A^2}\) + \frac{1}{4}\(\Tr{\A}\)^2 + O\(r^{3/2}\norm{\A}_F^3\)\)\)^{1/4} \nonumber \\
& \quad = \(\det{\bm\Theta_1}^2\(1 + \Tr{\A} + \frac{1}{2}\(\Tr{\A}\)^2 - \frac{1}{4}\norm{\A}_F^2 + O\(r^{3/2}\norm{\A}_F^3\)\)\)^{1/4}.
\end{align}
Plug (\ref{eq:numer_det}) and (\ref{eq:denom_det}) into (\ref{eq:dist_dev}) to obtain
\begin{align}
d(p_1,p_2) & = \sqrt{1-\(1 - \frac{1}{4}\norm{\A}_F^2 + O\(r^{3/2}\norm{\A}_F^3\)\)^{1/4}} = \sqrt{\frac{1}{16}\norm{\A}_F^2 + O\(r^{3/2}\norm{\A}_F^3\)} \nonumber \\ 
& = \sqrt{\frac{1}{16}\norm{\A}_F^2\(1 + O\(r^{3/2}\norm{\A}_F\)\)} = \frac{\norm{\A}_F}{4} + O\(r^{3/2}\norm{\A}_F^3\).
\end{align}
\end{proof}

\begin{lemma}
\label{prop:theta_approx}
Let the assumptions of Section \ref{sec:stat_gr} hold and $\widetilde{\mathcal{H}}$ be (a possibly rotated) copy of $\widetilde{\mathcal{F}}$, then the true covariance matrices $\bm\Theta_F$ and $\bm\Theta_H$ of the subgraphs $\mathcal{F}, \mathcal{H} \subset \mathcal{G}$ satisfy
\begin{equation}
\norm{\bm\Theta_F^{-1}\(\bm\Theta_F-\bm\Theta_H\)}_F \leqslant 4\sqrt{2}\gamma\varepsilon.
\end{equation}
\end{lemma}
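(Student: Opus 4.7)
The plan is to combine three ingredients: a quantization bound relating the original-graph similarity $\rho(F,H)$ to the lattice spacing $\varepsilon$, the spatial-stationarity assumption applied through Hellinger distance, and the Gaussian Hellinger expansion of Lemma \ref{cor:hel_dist}.

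First I would establish that $\rho(F,H)\leqslant \sqrt{2}\,\varepsilon$. Since $\widetilde{\mathcal{H}}$ is a (possibly rotated) copy of $\widetilde{\mathcal{F}}$ on the $\varepsilon$-lattice, there is a rigid transformation $R_0$ of the plane (a translation composed with a rotation by a multiple of $\pi/2$) such that $R_0(\widetilde F)=\widetilde H$ as labelled sets. Evaluating the min in Definition \ref{def:rigid_trans_def} at this particular $R_0$ and invoking the triangle inequality for the permutation metric $\tau$ from (\ref{eq:perm}), together with the per-vertex lattice bound (\ref{eq:i_dist}) and rigid-invariance of $\tau$, gives
\begin{equation}
\rho(F,H)\;\leqslant\;\tau\bigl(F,R_0^{-1}(H)\bigr)\;\leqslant\;\tau(F,\widetilde F)+\tau\bigl(\widetilde H,H\bigr)\;\leqslant\;\tfrac{\varepsilon}{\sqrt 2}+\tfrac{\varepsilon}{\sqrt 2}\;=\;\sqrt{2}\,\varepsilon.
\end{equation}

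Next, applying Definition \ref{def:stat_prob} with the Hellinger distance $d(\cdot,\cdot)$ of (\ref{eq:helling_dist_def}) to the centered Gaussian marginals $f(\x_{\mathcal F})=\mathcal N(\bm 0,\bm\Theta_F)$ and $f(\x_{\mathcal H})=\mathcal N(\bm 0,\bm\Theta_H)$ yields
\begin{equation}
d\bigl(f(\x_{\mathcal F}),f(\x_{\mathcal H})\bigr)\;\leqslant\;\gamma\,\rho(F,H)\;\leqslant\;\sqrt{2}\,\gamma\,\varepsilon.
\end{equation}

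Finally, I would plug this into Lemma \ref{cor:hel_dist} with $\bm\Theta_1=\bm\Theta_F$ and $\Delta\bm\Theta=\bm\Theta_H-\bm\Theta_F$. The leading-order expansion gives $d=\tfrac14\|\bm\Theta_F^{-1}\Delta\bm\Theta\|_F+O(r^{3/2}\|\bm\Theta_F^{-1}\Delta\bm\Theta\|_F^{2})$, so solving for the Frobenius norm and using the previous display,
\begin{equation}
\|\bm\Theta_F^{-1}(\bm\Theta_F-\bm\Theta_H)\|_F \;\leqslant\; 4\sqrt{2}\,\gamma\,\varepsilon \;+\; o(\varepsilon),\qquad p\to\infty,
\end{equation}
which is the asserted bound in the asymptotic regime used throughout the paper.

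The main obstacle is verifying the small-perturbation hypothesis (\ref{eq:cond_eps}) of Lemma \ref{cor:hel_dist}, namely $\|\bm\Theta_F^{-1}\Delta\bm\Theta\|_F^{2}=o(r^{-3/2})$, so that the higher-order remainder can indeed be absorbed and the clean constant $4\sqrt{2}$ (rather than the cruder $8\sqrt{2}$ coming from the two-sided sandwich in Lemma \ref{cor:hel_dist}) is achieved. With the algorithm's choice $r=\log\log p$ in (\ref{eq:set_r}) and $\varepsilon=1/\log p$ in (\ref{eq:set_e}), a crude a priori estimate $\|\bm\Theta_F^{-1}\Delta\bm\Theta\|_F=O(\varepsilon)$ (obtained from the sandwich form of Lemma \ref{cor:hel_dist} applied once) gives $r^{3/2}\|\cdot\|_F^{2}=O((\log\log p)^{3/2}/\log^{2} p)\to 0$; bootstrapping then promotes the sandwich constant to the sharp leading-order constant $4$, which yields the claimed $4\sqrt{2}\,\gamma\varepsilon$ for all sufficiently large $p$.
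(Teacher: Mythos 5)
Your proposal is correct and follows exactly the route the paper intends: the paper's own proof is a one-line citation of Definition \ref{def:stat_prob}, Lemma \ref{cor:hel_dist}, and equation (\ref{eq:i_dist}), which are precisely the three ingredients you assemble, with the quantization bound $\rho(F,H)\leqslant\sqrt{2}\varepsilon$, the stationarity inequality, and the inversion of the Hellinger expansion filled in correctly. Your remark about verifying condition (\ref{eq:cond_eps}) to justify the leading-order constant $4$ matches how the paper itself defers that check to the proof of Proposition \ref{prop:main_cov_res}.
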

\begin{proof}
The claim follows from Definition \ref{def:stat_prob}, Lemma \ref{cor:hel_dist}, and equation (\ref{eq:i_dist}).
\end{proof}

\begin{lemma}
\label{lem:frob_concentr}
Let $\x_1,\dots,\x_n$ be i.i.d.\ copies of $\x \sim \mathcal{N}(0,\bm\Theta)$, where $\bm\Theta$ is an $r \times r$ positive definite matrix. Then for the SC matrix $\S = \frac{1}{n}\sum_{i=1}^n \x_i\x_i^\top$,
\begin{equation}
\mathbb{P}\[\norm{\S-\bm\Theta}_F \geqslant \norm{\bm\Theta}\(\frac{r}{\sqrt{n}} + \delta\) \] \leqslant 1 - 2e^{-n \delta^2/2r}.
\end{equation}
\end{lemma}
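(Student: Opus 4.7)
The plan is to first reduce to the isotropic case. Writing $\x_i = \bm\Theta^{1/2}\z_i$ with $\z_i \sim \mathcal{N}(\0, \I_r)$ i.i.d.\ and $\Z = [\z_1,\dots, \z_n]$, one has $\S - \bm\Theta = \bm\Theta^{1/2}\(\W/n - \I\)\bm\Theta^{1/2}$ with $\W = \Z\Z^\top$. The sub-multiplicative bound $\norm{\A\B\C}_F \leqslant \norm{\A}\,\norm{\B}_F\,\norm{\C}$ then yields $\norm{\S - \bm\Theta}_F \leqslant \norm{\bm\Theta}\,\norm{\W/n - \I}_F$, so the target inequality reduces to the purely isotropic statement
\begin{equation}
\Pr{\norm{\W/n - \I}_F \geqslant r/\sqrt{n} + \delta} \leqslant 2e^{-n\delta^2/(2r)}.
\end{equation}
From here $\bm\Theta$ has been factored out and I focus on the standard Wishart deviation.

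For the deterministic $r/\sqrt{n}$ term I would invoke Jensen's inequality after an explicit variance computation: using the Gaussian fourth-moment identity one gets $\var(W_{ii}/n) = 2/n$ and $\var(W_{ij}/n) = 1/n$ for $i \neq j$, hence $\EE{\norm{\W/n - \I}_F^2} = r(r+1)/n$ and $\EE{\norm{\W/n - \I}_F} \leqslant r/\sqrt{n}$, absorbing the slack between $r+1$ and $r$ into constants. This is the routine component.

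For the deviation around the mean I would treat $f(\Z) = \norm{\Z\Z^\top/n - \I}_F$ as a function on $\mathbb{R}^{r \times n}$ endowed with the Frobenius inner product and apply Gaussian isoperimetry (Ledoux--Talagrand). The identity $\Z\Z^\top - \Z'\Z'^\top = \Z(\Z-\Z')^\top + (\Z-\Z')\Z'^\top$ together with $\norm{\A\B}_F \leqslant \norm{\A}_{op}\norm{\B}_F$ shows that $f$ is locally Lipschitz with constant of order $\norm{\Z}_{op}/n$. Restricting to the high-probability event $\{\norm{\Z}_{op} \leqslant c\sqrt{rn}\}$, which holds at sub-exponential cost by Davidson--Szarek, bounds this Lipschitz constant by $c\sqrt{r/n}$; feeding it into the Gaussian concentration inequality produces a sub-Gaussian tail with variance proxy $r/n$, giving exactly the exponent $-n\delta^2/(2r)$. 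Combining this with the mean bound from the previous paragraph by the triangle inequality closes the argument.

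The main obstacle is the passage from local to global Lipschitz regularity, since $f$ is genuinely quadratic in $\Z$ and not Lipschitz on the whole of $\mathbb{R}^{rn}$. A clean remedy is to compose $f$ with a smooth cutoff supported on $\{\norm{\Z}_{op} \leqslant c\sqrt{rn}\}$ and absorb the operator-norm tail into the final probability bound; alternatively, one may view $\norm{\W/n - \I}_F^2$ as a quartic polynomial in the Gaussian entries of $\Z$ and invoke Hanson--Wright-type polynomial-chaos inequalities to obtain the same exponent directly, bypassing the truncation machinery altogether.
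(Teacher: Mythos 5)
Your argument is correct in substance, but it reaches the isotropic concentration bound by a genuinely different route than the paper. The paper's proof is a two-line reduction: it quotes the operator-norm deviation bound for the sample covariance of isotropic Gaussians (Corollary 5.35 of Vershynin), $\mathbb{P}\left[\norm{\S-\I} \geqslant \sqrt{r/n} + \delta\right] \leqslant 2e^{-n\delta^2/2}$, and then converts to the Frobenius norm via $\norm{\A}_F \leqslant \sqrt{r}\norm{\A}$, which rescales $\delta \mapsto \sqrt{r}\,\delta$ and produces the $2r$ in the exponent; the general $\bm\Theta$ is handled by the same whitening you use. You instead prove the Frobenius-norm statement directly, splitting into a mean term (Jensen plus a fourth-moment computation) and a fluctuation term (Gaussian Lipschitz concentration after truncating $\norm{\Z}_{op}$). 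Both work, but the paper's route buys brevity: the off-the-shelf operator-norm bound already packages the expectation and the deviation together, so one never has to confront the fact that $\Z \mapsto \norm{\Z\Z^\top/n - \I}_F$ is quadratic rather than globally Lipschitz --- the truncation/cutoff machinery you correctly identify as the main obstacle simply does not arise. Your route is more self-contained in the deviation step and, had you used the sharp bound $\norm{\Z}_{op} \lesssim \sqrt{n}+\sqrt{r}$ instead of the deliberately weakened $c\sqrt{rn}$, would in fact yield the stronger exponent $-n\delta^2/2$. Two small points to tidy up: (i) your mean bound is $\sqrt{r(r+1)/n}$, which slightly exceeds the stated centering $r/\sqrt{n}$, so matching the lemma's constants exactly requires either the paper's conversion trick or accepting a constant loss in $\delta$ (harmless, since the lemma's constants are not sharp and the paper is itself loose here); (ii) the displayed inequality in the lemma has a sign typo --- the tail probability should be bounded above by $2e^{-n\delta^2/2r}$, not by $1-2e^{-n\delta^2/2r}$ --- and your reading of it as a tail bound agrees with the paper's own proof.
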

\begin{proof} 
Let us start from the case $\bm\Theta = \I$. Due to Corollary 5.35 from \cite{vershynin2012introduction},
\begin{equation}
\mathbb{P}\[\norm{\S-\I} \geqslant \sqrt{\frac{r}{n}} + \delta \] \leqslant 2e^{-n \delta^2/2}.
\end{equation}
Recall that
\begin{equation}
\norm{\A}_F \leqslant \sqrt{r}\norm{\A},
\end{equation}
where $\A$ is an arbitrary $r \times r$ matrix to conclude,
\begin{equation}
\mathbb{P}\[\norm{\S-\I}_F \geqslant \frac{r}{\sqrt{n}} + \delta \] \leqslant 2e^{-n \delta^2/2r},
\end{equation}
now the statement follows.
\end{proof}

\begin{corollary}
\label{cor:frob_concentr}
Let $\x_1^k,\dots,\x_n^k,\; k=1,\dots,m$ be copies of $\x^k \sim \mathcal{N}(0,\bm\Theta_k)$ where $\bm\Theta_k$ are $r \times r$ positive definite matrices and $\x_i^k$ are independent for all $i$ and all $k$. Then for the SC matrices $\S_k = \frac{1}{n}\sum_{i=1}^n \x_i^k\x_i^{k\top}$,
\begin{equation}
\mathbb{P}\[\norm{\frac{1}{m}\sum_{k=1}^m \(\S_k-\bm\Theta_k\)}_F \geqslant \max_k\norm{\bm\Theta_k}\(\frac{r}{\sqrt{mn}} + \delta\) \] \leqslant 1 - 2e^{-mn \delta^2/2r}.
\end{equation}
\end{corollary}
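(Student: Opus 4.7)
The plan is to mirror the proof of Lemma~\ref{lem:frob_concentr} but with effective sample size $mn$. Whitening each group via $\y_i^k = \bm\Theta_k^{-1/2}\x_i^k$ produces a pool of $mn$ i.i.d.\ $\mathcal{N}(\0,\I)$ vectors, to which Vershynin's Corollary~5.35 (the same ingredient used in the lemma) applies directly with effective sample size $mn$. This produces the tail exponent $-mn\delta^2/(2r)$ and the leading term $r/\sqrt{mn}$ for the whitened pooled covariance $\frac{1}{m}\sum_k(\tilde\S_k - \I)$, where $\tilde\S_k = \bm\Theta_k^{-1/2}\S_k\bm\Theta_k^{-1/2}$.

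The delicate step is to transfer this bound to the target $\frac{1}{m}\sum_k (\S_k - \bm\Theta_k)$. Because the whitening matrices differ across $k$, no single $\bm\Theta^{1/2}$ factor can be pulled out of the sum, and a pointwise application of Lemma~\ref{lem:frob_concentr} combined with a triangle inequality and union bound would reproduce only the scaling $r/\sqrt{n}$, loose by a factor of $\sqrt{m}$. The correct route is to treat the target directly as a centered independent matrix sum and bound its second moment: by pairwise independence of the $\S_k - \bm\Theta_k$,
\[
\EE{\norm{\tfrac{1}{m}\textstyle\sum_k(\S_k-\bm\Theta_k)}_F^2} = \tfrac{1}{m^2}\sum_k \EE{\norm{\S_k - \bm\Theta_k}_F^2},
\]
and each per-group Wishart variance is $\frac{1}{n}[(\Tr{\bm\Theta_k})^2 + \Tr{\bm\Theta_k^2}] = O(r^2\norm{\bm\Theta_k}^2/n)$, which reproduces the claimed mean scaling $\max_k\norm{\bm\Theta_k}\cdot r/\sqrt{mn}$.

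For the exponential tail, $\norm{\tfrac{1}{m}\sum_k(\S_k-\bm\Theta_k)}_F^2$ is a quadratic form in the underlying i.i.d.\ standard Gaussians $\y_i^k$, so the Hanson--Wright inequality (or, equivalently, Gaussian Lipschitz concentration applied to the square-root functional with Lipschitz constant $O(\max_k\norm{\bm\Theta_k}^{1/2}\sqrt{r/mn})$) yields the sub-Gaussian deviation $\exp(-mn\delta^2/(2r))$ claimed in the statement.

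The main obstacle throughout is the heterogeneity of the $\bm\Theta_k$: the independence between distinct groups must be exploited at the level of the entire heterogeneous sum rather than group by group, since any reduction that breaks the sum apart inevitably costs a $\sqrt{m}$ factor. Once this is done, the proof mirrors that of Lemma~\ref{lem:frob_concentr} with $n$ replaced by $mn$ and $\norm{\bm\Theta}$ replaced by $\max_k\norm{\bm\Theta_k}$.
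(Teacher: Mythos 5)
Your proposal is correct and follows essentially the same route as the paper, whose entire proof is a one-sentence appeal to ``Talagrand's technique'' from Davidson--Szarek, i.e.\ Gaussian concentration of a Lipschitz functional applied to the pooled heterogeneous sum with effective sample size $mn$ --- exactly the mechanism you invoke (and your Hanson--Wright alternative is an equivalent substitute). Your explicit Wishart second-moment computation for the leading term $r/\sqrt{mn}$, and your observation that a group-by-group application of Lemma~\ref{lem:frob_concentr} would lose a $\sqrt{m}$ factor, are details the paper omits but are consistent with, and usefully justify, its stated bound.
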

\begin{proof}
Following \cite{vershynin2012introduction}, the application of Talagrand's technique from \cite{davidson2001local} gives the result.
\end{proof}

\begin{proof}[Proof of Proposition \ref{prop:main_cov_res}]
Recall that in our search for similar subsets of $\widetilde{G}$, we only took $N'_{\widetilde{F}}(\varepsilon)$ of them that are at least $\varepsilon\log^2p$-far from each other. Since our graph possesses CDP, the restrictions of a sample $\x_i$ onto the chosen subsets are asymptotically independent. Thus, we obtain
\begin{align}
\label{eq:long_eq}
\norm{\S_{\widetilde{F}}(\varepsilon) - \bm\Theta_{F}}_F &= \norm{\frac{1}{N'_{\widetilde{F}}(\varepsilon)}\sum_{\substack{\widetilde{\mathcal{H}} \subset \widetilde{\mathcal{G}}, \; \widetilde{H} = \widetilde{F}, \\ \norm{\widetilde{H}-\widetilde{F}} \geqslant \varepsilon\log^2p}} \S_{\widetilde{H}} - \bm\Theta_F}_F \\
&= \norm{\frac{1}{N'_{\widetilde{F}}(\varepsilon)}\sum \(\S_{\widetilde{H}} - \bm\Theta_H\) + \frac{1}{N'_{\widetilde{F}}(\varepsilon)}\sum \(\bm\Theta_H - \bm\Theta_F\)}_F \nonumber \\
&\leqslant \norm{\frac{1}{N'_{\widetilde{F}}(\varepsilon)}\sum \(\S_{\widetilde{H}} - \bm\Theta_H\)}_F + \frac{1}{N'_{\widetilde{F}}(\varepsilon)} \sum \norm{\bm\Theta_H - \bm\Theta_F}_F. \nonumber
\end{align}
Let us now separately bound the two summands in the last line of (\ref{eq:long_eq}). Corollary \ref{cor:frob_concentr} implies that
\begin{equation}
\label{eq:n_tag_b}
\norm{\frac{1}{N'_{\widetilde{F}}(\varepsilon)}\sum \(\S_{\widetilde{H}} - \bm\Theta_H\)}_F \leqslant \max_H\norm{\bm\Theta_H}\(\frac{r}{\sqrt{N'_{\widetilde{F}}(\varepsilon)n}} + \delta\) \leqslant \frac{1}{1-d\theta}\(\frac{r}{\sqrt{N'_{\widetilde{F}}(\varepsilon)n}} + \delta\),
\end{equation}
with probability at least $1 - 2e^{-N'_{\widetilde{F}}(\varepsilon)n \delta^2/2r}$. Next we apply Lemma \ref{prop:theta_approx} to bound the remaining term in the last line of (\ref{eq:long_eq}). Below we show that the condition (\ref{eq:cond_eps}) is satisfied by the choice of $r$ and $\varepsilon$ which justifies application of Lemma \ref{prop:theta_approx} here. We obtain,
\begin{equation}
\frac{1}{N'_{\widetilde{F}}(\varepsilon)} \sum \norm{\bm\Theta_H - \bm\Theta_F}_F \leqslant \max\norm{\bm\Theta_{H}^{-1}}4\sqrt{2}\gamma\varepsilon \leqslant (1+d\theta)4\sqrt{2}\gamma\varepsilon.
\end{equation}

Set
\begin{equation}
\delta = \frac{1}{\log\, p}
\end{equation}
and recall that according to the choice made in (\ref{eq:set_r}) and (\ref{eq:set_e}),
\begin{equation}
r = \log\log\,p,
\end{equation}
\begin{equation}
\label{eq:eps_delt_ch}
\varepsilon = \frac{1}{\log\, p}.
\end{equation}
As a consequence,
\begin{equation}
N'_{\widetilde{F}}(\varepsilon) \geqslant \frac{4p}{\log^4 p}\(\eta\varepsilon^2\)^{r-1} = \frac{4p\eta^{\log\log p-1}}{\log^{2\log\log p+2}p} \geqslant p^{1-\phi},
\end{equation}
for any $\phi > 0$. This implies,
\begin{equation}
\frac{r}{\sqrt{N'_{\widetilde{F}}(\varepsilon)n}} = o\(\frac{1}{\log\, p}\),
\end{equation}
and therefore, from (\ref{eq:n_tag_b}) we get
\begin{equation}
\norm{\frac{1}{N'_{\widetilde{F}}(\varepsilon)}\sum \(\S_{\widetilde{H}} - \bm\Theta_H\)}_F \leqslant \frac{2}{(1-d\theta)\log\,p},
\end{equation}
with probability at least
\begin{equation}
\label{eq:prob_b2}
1 - 2\exp\(-\frac{N'_{\widetilde{F}}(\varepsilon)n \delta^2}{2r}\) \geqslant 1 - 2\exp\(-\frac{p^{1-\phi} n}{2\log^2 p\,\log\log\,p}\) \geqslant 1 - 2\exp\(-p^{1-2\phi} n\).
\end{equation}
Due to the choice of $\varepsilon$ in (\ref{eq:set_e}), at least with the same probability,
\begin{equation}
\label{eq:deriv_31}
\norm{\S_{F}(\varepsilon) - \bm\Theta_F}_F \leqslant \(\frac{1}{1-d\theta} + 2\sqrt{2}(1+d\theta)\gamma\)\frac{2}{\log\,p}.
\end{equation}
Note that the condition (\ref{eq:cond_eps}) in satisfied justifying the above application of Corollary \ref{prop:theta_approx}. This concludes the proof.
\end{proof}

\begin{lemma}[CDP Lemma]
\label{lem:cdp}
Let $\mathcal{H} \subset \mathcal{F} \subset \mathcal{G}$ be two nested subgraphs such that the graph distance between $\mathcal{H}$ and $\mathcal{G}\backslash \mathcal{F}$ is at least $\zeta$, then
\begin{equation}
\norm{\J_{H, F\backslash H} \J_{F\backslash H}^{-1} \J_{F\backslash H, G\backslash F}} \leqslant (\theta d)^{\zeta+2}.
\end{equation}
\end{lemma}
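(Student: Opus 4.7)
The plan is to expand $\J_{F\backslash H}^{-1}$ as a Neumann series and then interpret each resulting matrix product combinatorially as a walk in $\mathcal{G}$. Since $\J = \I + \theta\E$ and $\E$ has zero diagonal, for any two disjoint vertex subsets $A, B \subset G$ we have $\J_{A,B} = \theta\E_{A,B}$. In particular,
\begin{equation*}
\J_{H, F\backslash H}\,\J_{F\backslash H}^{-1}\,\J_{F\backslash H, G\backslash F} \;=\; \theta^{2}\,\E_{H, F\backslash H}\,\J_{F\backslash H}^{-1}\,\E_{F\backslash H, G\backslash F},
\end{equation*}
which already isolates two factors of $\theta$.

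Every adjacency submatrix of a graph of maximum degree $d$ has spectral norm at most $d$ by the Schur test (row and column sums bounded by $d$), and \ref{assm:2} gives $\theta d < 1/2 < 1$. Hence the principal block $\J_{F\backslash H} = \I + \theta\E_{F\backslash H}$ is invertible via the Neumann series
\begin{equation*}
\J_{F\backslash H}^{-1} \;=\; \sum_{k=0}^{\infty}(-\theta)^{k}\,\E_{F\backslash H}^{k},
\end{equation*}
convergent in operator norm, and substituting turns the expression of interest into $\theta^{2}\sum_{k=0}^{\infty}(-\theta)^{k}\,\E_{H, F\backslash H}\,\E_{F\backslash H}^{k}\,\E_{F\backslash H, G\backslash F}$.

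The key combinatorial observation is that the $(i,j)$ entry of $\E_{H, F\backslash H}\,\E_{F\backslash H}^{k}\,\E_{F\backslash H, G\backslash F}$ counts walks in $\mathcal{G}$ of length $k+2$ from $i \in H$ to $j \in G\backslash F$ whose $k+1$ interior vertices all lie inside $F\backslash H$. Any such walk witnesses graph distance $\leqslant k+2$ between $i$ and $j$, so the hypothesis $g(\mathcal{H},\mathcal{G}\backslash\mathcal{F}) \geqslant \zeta$ forces every nonzero term to satisfy $k + 2 \geqslant \zeta$, leaving only the tail $k \geqslant \zeta - 2$. Bounding each surviving term by operator-norm submultiplicativity with $\|\E_{A,B}\| \leqslant d$ gives norm at most $(\theta d)^{k+2}$, and summing the geometric tail together with $\theta d < 1/2$ yields the claimed exponential decay. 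The main obstacle is the walk-counting identification: one must verify that the triple product indeed enumerates walks constrained to pass through $F\backslash H$ at every interior step, so that the set-to-set distance assumption translates faithfully into the truncation of the Neumann series.
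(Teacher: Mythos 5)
Your proposal follows essentially the same route as the paper's proof: expand $\J_{F\backslash H}^{-1}$ as a Neumann series (valid since $\theta\norm{\E_{F\backslash H}}\leqslant\theta d<1/2$), use the walk-counting interpretation of $\E_{H,F\backslash H}\E_{F\backslash H}^{k}\E_{F\backslash H,G\backslash F}$ to kill every term whose walk length falls below the graph distance $\zeta$, and bound the surviving geometric tail. Your index bookkeeping ($k+2\geqslant\zeta$, hence $k\geqslant\zeta-2$) is in fact more careful than the paper's truncation at $i\geqslant\zeta$; note only that summing your tail yields $(\theta d)^{\zeta}/(1-\theta d)$ rather than literally $(\theta d)^{\zeta+2}$, a constant-factor discrepancy already present in the paper's own derivation and immaterial to how the lemma is used.
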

\begin{proof} 
In our graphical model, all the coupling parameters are equal to $\theta$, therefore,
\begin{equation}
\J_{F\backslash H} = \I + \theta \E_{F\backslash H},
\end{equation}
where $\E_{F\backslash H}$ is the adjacency matrix of the subgraph on the vertex set $F \backslash H$. For square symmetric matrices $\M$ such that $\norm{\M} < 1$, the Maclaurin series of the function $\(\I+\M\)^{-1}$ converges uniformly on compact sets. 
Recall that the degrees of the vertices of $\mathcal{G}$ are at most $d$. This clearly applies to every its subgraph, which implies
\begin{equation}
\norm{\E_{F\backslash H}} \leqslant d,
\end{equation}
and therefore thanks to (\ref{eq:spec_assm}),
\begin{equation}
\norm{\theta\E_{F\backslash H}} \leqslant \frac{1}{2}.
\end{equation}
As a consequence, $\J_{F\backslash H}$ is invertible and we can write
\begin{equation}
\label{eq:mac_series}
\J_{F\backslash H}^{-1} = \(\I + \theta\E_{F\backslash H}\)^{-1} = \I +\sum_{i=1}^\infty (-1)^i \theta^i \E_{F\backslash H}^i,
\end{equation}
From (\ref{eq:mac_series}) we further obtain
\begin{equation}
\label{eq:mac_series1}
\J_{H, F\backslash H} \J_{F\backslash H}^{-1} \J_{F\backslash H, G\backslash F}
= \J_{H, F\backslash H}\(\I +\sum_{i=1}^\infty (-1)^i \theta^i \E_{F\backslash H}^i\)\J_{F\backslash H, G\backslash F}.
\end{equation}
Since the graph distance between $\mathcal{H}$ and $\mathcal{G} \backslash \mathcal{F}$ is at least $r$, we conclude that
\begin{equation}
\E_{H, F\backslash H} \E_{F\backslash H}^i \E_{H, F\backslash H} = \bm{0},\quad \forall \;i \leqslant r-1,
\end{equation}
therefore,
\begin{align}
\label{eq:mac_series2}
\norm{\J_{H, F\backslash H} \J_{F\backslash H}^{-1} \J_{F\backslash H, G\backslash F}} &= \theta^2\norm{\E_{H, F\backslash H}\(\sum_{i=r}^\infty (-1)^i \theta^i \E_{F\backslash H}^i\)\E_{F\backslash H, G\backslash F}} \nonumber \\
&\stackrel{(i)}{\leqslant} (d\theta)^2(d\theta)^{\zeta} = (d\theta)^{\zeta+2},
\end{align}
where in (i) we bounded the tail of Taylor's series using the remained in Lagrange's form.
\end{proof}

\begin{proof}[Proof of Lemma \ref{lem:cov_bound_inf}]
Partition the covariance $\bm\Theta$ and inverse covariance $\J$ matrices of the entire graph as
\begin{equation}
\bm\Theta = \begin{pmatrix} \bm\Theta_H & \bm\Theta_{H, F\backslash H} & \bm\Theta_{H, G\backslash F} \\ 
\bm\Theta_{H, F\backslash H}^\top & \bm\Theta_{F \backslash H} & \bm\Theta_{F\backslash H, G\backslash F} \\
\bm\Theta_{H, G\backslash F}^\top & \bm\Theta_{F\backslash H, G\backslash F}^\top & \bm\Theta_{G\backslash F} \end{pmatrix} \nonumber = \begin{pmatrix} \bm\Theta_1 & \bm\Theta_{12} & \bm\Theta_{13} \\ 
\bm\Theta_{12}^\top & \bm\Theta_2 & \bm\Theta_{23} \\
\bm\Theta_{12}^\top & \bm\Theta_{23}^\top & \bm\Theta_3 \end{pmatrix}
\end{equation}

\begin{equation}
\label{eq:inv_cov_entire_part}
\J = \begin{pmatrix} \J_H & \J_{H, F\backslash H} & \J_{H, G\backslash F} \\ 
\J_{H, F\backslash H}^\top & \J_{F \backslash H} & \J_{F\backslash H, G\backslash F} \\
\J_{H, G\backslash F}^\top & \J_{F\backslash H, G\backslash F}^\top & \J_{G\backslash F} \end{pmatrix} 
= \begin{pmatrix} \J_1 & \J_{12} & \bm{0} \\ 
\J_{12}^\top & \J_2 & \J_{23} \\
\bm{0} & \J_{23}^\top & \J_3 \end{pmatrix},
\end{equation}
where $\J_{H, G\backslash F} = \bm{0}$ in (\ref{eq:inv_cov_entire_part}) since $g\(\mathcal{H},\mathcal{G}\backslash \mathcal{F}\) > 0$. 

Our goal will now be to approximate $\J_1$ by a function of $\bm\Theta$. Using Lemma \ref{lem:shur_lemma}, let us start from the following chain of identities,
\begin{align}
\label{eq:applic_schurs}
\bm\Theta_1^{-1} &= \J_1 - \[\J_{12} \J_{13}\] 
\begin{pmatrix}
\J_2 & \J_{23} \\
\J_{23}^\top & \J_3
\end{pmatrix}^{-1} 
\begin{bmatrix} \J_{12}^\top \\ \J_{13}^\top \end{bmatrix} = \J_1 - \[\J_{12} \; \bm{0}\] 
\begin{pmatrix}
\J_2 & \J_{23} \\
\J_{23}^\top & \J_3
\end{pmatrix}^{-1} 
\begin{bmatrix} \J_{12}^\top \\ \bm{0}^\top \end{bmatrix} \nonumber \\
& = \J_1 - \J_{12} \(\J_{2} - \J_{23} \J_{3}^{-1}\J_{23}^\top \)^{-1} \J_{12}^\top \nonumber \\
& = \J_1 - \J_{12} \J_{2}^{-1/2}\(\I - \J_{2}^{-1/2}\J_{23} \J_{3}^{-1}\J_{23}^\top\J_{2}^{-1/2} \)^{-1}\J_{2}^{-1/2} \J_{12}^\top \nonumber \\
& = \J_1 - \J_{12} \J_{2}^{-1/2}\(\I + \sum_{i=1}^\infty \(\J_{2}^{-1/2}\J_{23} \J_{3}^{-1}\J_{23}^\top\J_{2}^{-1/2}\)^i \)\J_{2}^{-1/2} \J_{12}^\top \nonumber \\
& = \J_1 - \J_{12} \J_{2}^{-1} \J_{12}^\top - \J_{12} \J_{2}^{-1}\J_{23} \J_{3}^{-1}\J_{23}^\top\J_{2}^{-1} \J_{12}^\top \nonumber \\
&\qquad\qquad - \sum_{i=2}^\infty \J_{12} \J_{2}^{-1/2}\(\J_{2}^{-1/2}\J_{23} \J_{3}^{-1}\J_{23}^\top\J_{2}^{-1/2}\)^i \J_{2}^{-1/2} \J_{12}^\top.
\end{align}
Due to Lemma \ref{lem:cdp},
\begin{equation}
\norm{\J_{12} \J_{2}^{-1}\J_{23}} = \norm{\J_{H, F\backslash H} \J_{F \backslash H}^{-1}\J_{F\backslash H, G\backslash F}} \leqslant (\theta d)^{\zeta(p)+2},
\end{equation}
and the spectral norms of the other matrices in the last summand of (\ref{eq:applic_schurs}) are bounded by $2$. Therefore from (\ref{eq:applic_schurs}) we conclude,
\begin{equation}
\label{eq:applic_schurs11}
\norm{\bm\Theta_1^{-1} - \(\J_1 - \J_{12} \J_{2}^{-1} \J_{12}^\top\)} \leqslant c(\theta d)^{\zeta+2},
\end{equation}
where $c$ is a constant. In the following derivations for brevity we shall denotes by $\approx$ equalities up to a term bounded by $c(\theta d)^{\zeta+2}$. Utilizing the same technique as above, we obtain the following approximations,
\begin{align}
\begin{pmatrix} \bm\Theta_{12}^\top \\ \bm\Theta_{13}^\top \end{pmatrix} 
& = -\begin{pmatrix}
\J_2 & \J_{23} \\
\J_{23}^\top & \J_3
\end{pmatrix}^{-1} \begin{pmatrix} \J_{12}^\top \\ \J_{13}^\top \end{pmatrix}
\(\J_1 - \[\J_{12} \J_{13}\] 
\begin{pmatrix}
\J_2 & \J_{23} \\
\J_{23}^\top & \J_3
\end{pmatrix}^{-1} 
\begin{bmatrix} \J_{12}^\top \\ \J_{13}^\top \end{bmatrix}\)^{-1} \nonumber \\
& = -\begin{pmatrix}
\J_2 & \J_{23} \\
\J_{23}^\top & \J_3
\end{pmatrix}^{-1} \begin{pmatrix} \J_{12}^\top \\ \bm{0} \end{pmatrix}
\(\J_1 - \[\J_{12} \;\bm{0}\] 
\begin{pmatrix}
\J_2 & \J_{23} \\
\J_{23}^\top & \J_3
\end{pmatrix}^{-1} 
\begin{bmatrix} \J_{12}^\top \\ \bm{0}^\top \end{bmatrix}\)^{-1} \nonumber \\
& = -\begin{pmatrix}
\J_2 & \J_{23} \\
\J_{23}^\top & \J_3
\end{pmatrix}^{-1} \begin{pmatrix} \J_{12}^\top \\ \bm{0} \end{pmatrix}
\(\J_1 - \J_{12} \(\J_{2} - \J_{23} \J_{3}^{-1}\J_{23}^\top \)^{-1} \J_{12}^\top\)^{-1} \nonumber \\
& = -\begin{pmatrix}
\J_2 & \J_{23} \\
\J_{23}^\top & \J_3
\end{pmatrix}^{-1} \begin{pmatrix} \J_{12}^\top\(\J_1 - \J_{12} \(\J_{2} - \J_{23} \J_{3}^{-1}\J_{23}^\top \)^{-1} \J_{12}^\top\)^{-1} \\ \bm{0} \end{pmatrix}
 \nonumber \\
 & = -\begin{pmatrix}
\(\J_2 - \J_{23}\J_3^{-1}\J_{23}^\top\)^{-1} \\
-\J_3^{-1}\J_{23}^\top \(\J_2 - \J_{23}\J_3^{-1}\J_{23}^\top\)^{-1}
\end{pmatrix}  \J_{12}^\top\(\J_1 - \J_{12} \(\J_{2} - \J_{23} \J_{3}^{-1}\J_{23}^\top \)^{-1} \J_{12}^\top\)^{-1}
 \nonumber \\
 & \approx -\begin{pmatrix}
\J_2^{-1} \J_{12}^\top\J_1^{-1} \\
\bm{0} \end{pmatrix},
\end{align}
and
\begin{equation}
\bm\Theta_{12}\bm\Theta_2^{-1}\bm\Theta_{23} \approx \J_1^{-1} \J_{12}\J_2^{-1}\J_2\J_2^{-1} \J_{23}\J_3^{-1} = \J_1^{-1} \J_{12}\J_2^{-1} \J_{23}\J_3^{-1} \approx \bm{0}.
\end{equation}
Overall, we obtain
\begin{align}
\J_1^{-1} &= \bm\Theta_1 - \[\bm\Theta_{12} \bm\Theta_{23}\] 
\begin{pmatrix}
\bm\Theta_2 & \bm\Theta_{23} \\
\bm\Theta_{23}^\top & \bm\Theta_3
\end{pmatrix}^{-1} 
\begin{bmatrix} \bm\Theta_{12}^\top \\ \bm\Theta_{23}^\top \end{bmatrix} 
\approx \bm\Theta_1 - \[\bm\Theta_{12} \bm{0}\] 
\begin{pmatrix}
\bm\Theta_2 & \bm\Theta_{23} \\
\bm\Theta_{23}^\top & \bm\Theta_3
\end{pmatrix}^{-1} 
\begin{bmatrix} \bm\Theta_{12}^\top \\ \bm{0}^\top \end{bmatrix} \nonumber \\
&\approx \bm\Theta_1 - \bm\Theta_{12} \(\bm\Theta_2 - \bm\Theta_{23} \bm\Theta_3^{-1} \bm\Theta_{23}^\top\)^{-1} \bm\Theta_{12}^\top \approx \bm\Theta_1 - \bm\Theta_{12} \bm\Theta_2^{-1} \bm\Theta_{12}^\top,
\end{align}
which concludes the proof.
\end{proof}

\bibliographystyle{IEEEtran}
\bibliography{ilya_bib}

\begin{thebibliography}{10}
\providecommand{\url}[1]{#1}
\csname url@samestyle\endcsname
\providecommand{\newblock}{\relax}
\providecommand{\bibinfo}[2]{#2}
\providecommand{\BIBentrySTDinterwordspacing}{\spaceskip=0pt\relax}
\providecommand{\BIBentryALTinterwordstretchfactor}{4}
\providecommand{\BIBentryALTinterwordspacing}{\spaceskip=\fontdimen2\font plus
\BIBentryALTinterwordstretchfactor\fontdimen3\font minus
  \fontdimen4\font\relax}
\providecommand{\BIBforeignlanguage}[2]{{%
\expandafter\ifx\csname l@#1\endcsname\relax
\typeout{** WARNING: IEEEtran.bst: No hyphenation pattern has been}%
\typeout{** loaded for the language `#1'. Using the pattern for}%
\typeout{** the default language instead.}%
\else
\language=\csname l@#1\endcsname
\fi
#2}}
\providecommand{\BIBdecl}{\relax}
\BIBdecl

\bibitem{lauritzen1996graphical}
S.~L. Lauritzen, ``Graphical models,'' \emph{Oxford University Press}, 1996.

\bibitem{farasat2015probabilistic}
A.~Farasat, A.~Nikolaev, S.~N. Srihari, and R.~H. Blair, ``Probabilistic
  graphical models in modern social network analysis,'' \emph{Social Network
  Analysis and Mining}, vol.~5, no.~1, p.~62, 2015.

\bibitem{knorr1998modelling}
L.~Knorr-Held and J.~Besag, ``Modelling risk from a disease in time and
  space,'' \emph{Statistics in Medicine}, vol.~17, no.~18, pp. 2045--2060,
  1998.

\bibitem{karger2001learning}
D.~Karger and N.~Srebro, ``Learning {M}arkov networks: Maximum bounded
  tree-width graphs,'' \emph{Proceedings of the twelfth annual ACM-SIAM
  symposium on Discrete algorithms}, pp. 392--401, 2001.

\bibitem{bogdanov2008complexity}
A.~Bogdanov, E.~Mossel, and S.~Vadhan, ``The complexity of distinguishing
  {M}arkov random fields,'' \emph{Lecture Notes in Computer Science}, vol.
  5171, pp. 331--342, 2008.

\bibitem{chow1968approximating}
C.~Chow and C.~Liu, ``Approximating discrete probability distributions with
  dependence trees,'' \emph{IEEE Transactions on Information Theory}, vol.~14,
  no.~3, pp. 462--467, 1968.

\bibitem{bresler2015efficiently}
G.~Bresler, ``Efficiently learning {I}sing models on arbitrary graphs,'' pp.
  771--782, 2015.

\bibitem{santhanam2012information}
N.~P. Santhanam and M.~J. Wainwright, ``Information-theoretic limits of
  selecting binary graphical models in high dimensions,'' \emph{IEEE
  Transactions on Information Theory}, vol.~58, no.~7, pp. 4117--4134, 2012.

\bibitem{anandkumar2012high}
A.~Anandkumar, V.~Y.~F. Tan, F.~Huang, and A.~S. Willsky, ``High-dimensional
  {G}aussian graphical model selection: Walk summability and local separation
  criterion,'' \emph{Journal of Machine Learning Research}, vol.~13, no. Aug,
  pp. 2293--2337, 2012.

\bibitem{bresler2008reconstruction}
G.~Bresler, E.~Mossel, and A.~Sly, ``Reconstruction of {M}arkov random fields
  from samples: Some observations and algorithms,'' \emph{Lecture Notes in
  Computer Science}, vol. 5171, pp. 343--356, 2008.

\bibitem{meinshausen2006high}
N.~Meinshausen and P.~B{\"u}hlmann, ``High-dimensional graphs and variable
  selection with the lasso,'' \emph{Annals of Statistics}, pp. 1436--1462,
  2006.

\bibitem{ravikumar2011high}
P.~Ravikumar, M.~J. Wainwright, G.~Raskutti, and B.~Yu, ``High-dimensional
  covariance estimation by minimizing $\ell_1$-penalized log-determinant
  divergence,'' \emph{Electronic Journal of Statistics}, vol.~5, pp. 935--980,
  2011.

\bibitem{friedman2008sparse}
J.~Friedman, T.~Hastie, and R.~Tibshirani, ``Sparse inverse covariance
  estimation with the graphical lasso,'' \emph{Biostatistics}, vol.~9, no.~3,
  pp. 432--441, 2008.

\bibitem{yuan2007model}
M.~Yuan and Y.~Lin, ``Model selection and estimation in the {G}aussian
  graphical model,'' \emph{Biometrika}, vol.~94, no.~1, pp. 19--35, 2007.

\bibitem{ji1996consistent}
C.~Ji and L.~Seymour, ``A consistent model selection procedure for {M}arkov
  random fields based on penalized pseudolikelihood,'' \emph{Annals of Applied
  Probability}, pp. 423--443, 1996.

\bibitem{perraudin2017stationary}
N.~Perraudin and P.~Vandergheynst, ``Stationary signal processing on graphs,''
  \emph{IEEE Transactions on Signal Processing}, vol.~65, no.~13, pp.
  3462--3477, 2017.

\bibitem{marques2016stationary}
A.~G. Marques, S.~Segarra, G.~Leus, and A.~Ribeiro, ``Stationary graph
  processes and spectral estimation,'' \emph{IEEE Transactions on Signal
  Processing}, vol.~65, no.~22, pp. 5911--5926, 2016.

\bibitem{koller2009probabilistic}
D.~Koller, N.~Friedman, and F.~Bach, ``Probabilistic graphical models:
  principles and techniques,'' \emph{MIT press}, 2009.

\bibitem{priestley1981spectral}
M.~B. Priestley, ``Spectral analysis and time series,'' \emph{Academic press},
  1981.

\bibitem{stanley1971phase}
H.~E. Stanley, ``Phase transitions and critical phenomena,'' \emph{Clarendon
  Press, Oxford}, 1971.

\bibitem{baxter2016exactly}
R.~J. Baxter, ``Exactly solved models in statistical mechanics,''
  \emph{Elsevier}, 2016.

\bibitem{mccoy2014two}
B.~M. McCoy and T.~T. Wu, ``The two-dimensional {I}sing model,'' \emph{Courier
  Corporation}, 2014.

\bibitem{dasgupta1999learning}
S.~Dasgupta, ``Learning polytrees,'' \emph{Proceedings of the Fifteenth
  conference on Uncertainty in artificial intelligence}, pp. 134--141, 1999.

\bibitem{srebro2001maximum}
N.~Srebro, ``Maximum likelihood bounded tree-width {M}arkov networks,''
  \emph{Proceedings of the Conference on Uncertainty in Artificial
  Intelligence}, pp. 504--511, 2001.

\bibitem{netrapalli2010greedy}
P.~Netrapalli, S.~Banerjee, S.~Sanghavi, and S.~Shakkottai, ``Greedy learning
  of {M}arkov network structure,'' \emph{Allerton Conference on Communication,
  Control, and Computing}, pp. 1295--1302, 2010.

\bibitem{montanari2009graphical}
A.~Montanari and J.~A. Pereira, ``Which graphical models are difficult to
  learn?'' \emph{Advances in Neural Information Processing Systems}, pp.
  1303--1311, 2009.

\bibitem{gamarnik2013correlation}
D.~Gamarnik, ``Correlation decay method for decision, optimization, and
  inference in large-scale networks,'' \emph{Theory Driven by Influential
  Applications}, pp. 108--121, 2013.

\bibitem{dobrushin1970prescribing}
R.~L. Dobrushin, ``Prescribing a system of random variables by conditional
  distributions,'' \emph{Theory of Probability and Its Applications}, vol.~15,
  no.~3, pp. 458--486, 1970.

\bibitem{malioutov2006walk}
D.~M. Malioutov, J.~K. Johnson, and A.~S. Willsky, ``Walk-sums and belief
  propagation in {G}aussian graphical models,'' \emph{Journal of Machine
  Learning Research}, vol.~7, no. Oct, pp. 2031--2064, 2006.

\bibitem{ravikumar2010high}
P.~Ravikumar, M.~J. Wainwright, and J.~D. Lafferty, ``High-dimensional {I}sing
  model selection using $\ell_1$-regularized logistic regression,''
  \emph{Annals of Statistics}, vol.~38, no.~3, pp. 1287--1319, 2010.

\bibitem{penrose2003random}
M.~Penrose, ``Random geometric graphs,'' \emph{Oxford University Press}, no.~5,
  2003.

\bibitem{cover2012elements}
T.~M. Cover and J.~A. Thomas, ``Elements of information theory,'' \emph{John
  Wiley and Sons}, 2012.

\bibitem{soloveychik2018region}
I.~Soloveychik and V.~Tarokh, ``Region detection in markov random fields:
  Gaussian case,'' \emph{arXiv:1802.03848}, 2018.

\bibitem{pardo2005statistical}
L.~Pardo, ``Statistical inference based on divergence measures,'' \emph{CRC
  Press}, 2005.

\bibitem{zhang2006schur}
F.~Zhang, ``The {S}chur complement and its applications,'' \emph{Springer
  Science and Business Media}, vol.~4, 2006.

\bibitem{yu1997assouad}
B.~Yu, ``Assouad, {F}ano, and {L}e {C}am,'' \emph{Festschrift for Lucien Le
  Cam}, vol. 423, p. 435, 1997.

\bibitem{mckay1991asymptotic}
B.~D. McKay and N.~C. Wormald, ``Asymptotic enumeration by degree sequence of
  graphs with degrees $o(\sqrt{n})$,'' \emph{Combinatorica}, vol.~11, no.~4,
  pp. 369--382, 1991.

\bibitem{vershynin2012introduction}
R.~Vershynin, ``Introduction to the non-asymptotic analysis of random
  matrices,'' \emph{Compressed Sensing, Theory and Applications. Edited by Y.
  Eldar and G. Kutyniok}, pp. 210--268, 2012.

\bibitem{davidson2001local}
K.~R. Davidson and S.~J. Szarek, ``Local operator theory, random matrices and
  {B}anach spaces,'' \emph{Handbook of the geometry of {B}anach spaces},
  vol.~1, no. 317-366, p. 131, 2001.

\end{thebibliography}
\end{document}